\newcommand{\ARXIV}[2]{#1} 
\newcommand{\ra}[1]{\renewcommand{\arraystretch}{#1}}
\setlist{topsep=2pt,itemsep=0.5pt}
\newcommand{\La}{\mathcal{L}}
\newcommand{\Ldata}{\mathcal{L}_\text{data}}
\newcommand{\Lrank}{\mathcal{L}_\text{rank}}
\newcommand{\Ldisto}{\mathcal{L}_\text{disto}}
\newcommand{\cN}{\mathcal{N}}
\newcommand{\cK}{\mathcal{K}}
\newcommand{\cX}{\mathcal{X}}
\newcommand{\cT}{\mathcal{T}}
\newcommand{\bR}{\mathbb{R}}
\newcommand{\killpunct}[1]{}
\newtheorem{prop}{Proposition}
\colorlet{punct}{red!60!black}
\definecolor{background}{HTML}{EEEEEE}
\definecolor{delim}{RGB}{20,105,176}
\colorlet{numb}{magenta!60!black}
\definecolor{cellgreen}{RGB}{179, 255, 179}
\definecolor{cellred}{RGB}{255, 179, 179}
\lstdefinelanguage{json}{
    basicstyle=\scriptsize\ttfamily,
    numberstyle=\scriptsize,
    stepnumber=1,
    numbersep=8pt,
    showstringspaces=false,
    breaklines=true,
    frame=lines,
    literate=
     *{0}{{{\color{numb}0}}}{1}
      {1}{{{\color{numb}1}}}{1}
      {2}{{{\color{numb}2}}}{1}
      {3}{{{\color{numb}3}}}{1}
      {4}{{{\color{numb}4}}}{1}
      {5}{{{\color{numb}5}}}{1}
      {6}{{{\color{numb}6}}}{1}
      {7}{{{\color{numb}7}}}{1}
      {8}{{{\color{numb}8}}}{1}
      {9}{{{\color{numb}9}}}{1}
      {:}{{{\color{punct}{:}}}}{1}
      {,}{{{\color{punct}{,}}}}{1}
      {\{}{{{\color{delim}{\{}}}}{1}
      {\}}{{{\color{delim}{\}}}}}{1}
      {[}{{{\color{delim}{[}}}}{1}
      {]}{{{\color{delim}{]}}}}{1},
}
\newcommand\RSloop{\@ifnextchar\bgroup\RSloopa\RSloopb}
\newcommand\RSloopa[1]{\bgroup\RSloop#1\relax\egroup\RSloop}
\newcommand\RSloopb[1]%
\newcommand\X{0}
\newcommand\RS[1]%
\newcommand\RSdef[1]{\expandafter\def\csname RS:#1\endcsname}
\newlength\RSu
\title{Leveraging Class Hierarchies\\ with Metric-Guided Prototype Learning}
\begin{document}

\maketitle

\begin{abstract}
In many classification tasks, the set of target classes can be organized into a hierarchy.
This structure induces a semantic distance between classes, and can be summarized under the form of a \emph{cost matrix}, which defines a finite metric on the class set.
In this paper, we propose to model the hierarchical class structure by integrating this metric in the supervision of a \textit{prototypical network}.
Our method relies on jointly learning a feature-extracting network and a set of class prototypes whose relative arrangement in the embedding space follows an hierarchical metric.
We show that this approach allows for a consistent improvement of the error rate weighted by the cost matrix when compared to traditional methods and other prototype-based strategies.
Furthermore, when the induced metric contains insight on the data structure, our method improves the overall precision as well. Experiments on four different public datasets---from agricultural time series classification to depth image semantic segmentation---validate our approach.
\end{abstract}
\section{Introduction}
Most classification models focus on maximizing the prediction accuracy, regardless of the semantic nature of errors. This can lead to high performing models, but puzzling errors such as confusing tigers and sofas, and casts doubt on what a model actually understands of the required task and data distribution. 
Neural networks in particular have been criticized for their tendency to produce improbable yet confident errors, notably when under adversarial attacks \citep{akhtar2018threat}.
\LOIC{
Training deep models to produce not only produce fewer but also \emph{better} errors can increase their trustworthiness, which is crucial for downstream applications such as autonomous driving or land use and land cover monitoring \cite{bertinetto2020making, deng2010does}.}

In many classification problems, the target classes can be organized according to a tree-shaped hierarchical structure.
Such a taxonomy can be generated by domain experts, or automatically inferred from class names using the WordNet graph \citep{miller1990introduction} or from word embeddings \citep{mikolov2013word2vec}.
A step towards more reliable and interpretable algorithms would be to explicitly model the difference of gravity between errors, as defined by a hierarchical nomenclature.

For a classification task over a set $\cK$ of $K$ classes, the hierarchy of errors can be encapsulated by a cost matrix $D\in \bR^{K\times K}_+$, defined such that the cost of predicting class $k$ when the true class is $l$ is $D[k,l]\geq 0$, and $D[k,k]=0$ for all $k=1\cdots K$. Among many other options \citep{kosmopoulos2015evaluation}, one can define $D[k,l]$ as the length of the shortest path between the nodes corresponding to classes $k$ and $l$ in the tree-shaped class taxonomy.

\begin{figure}[t!]
\footnotesize
\begin{tabular}{cccc}
\includegraphics[width=0.25\textwidth]{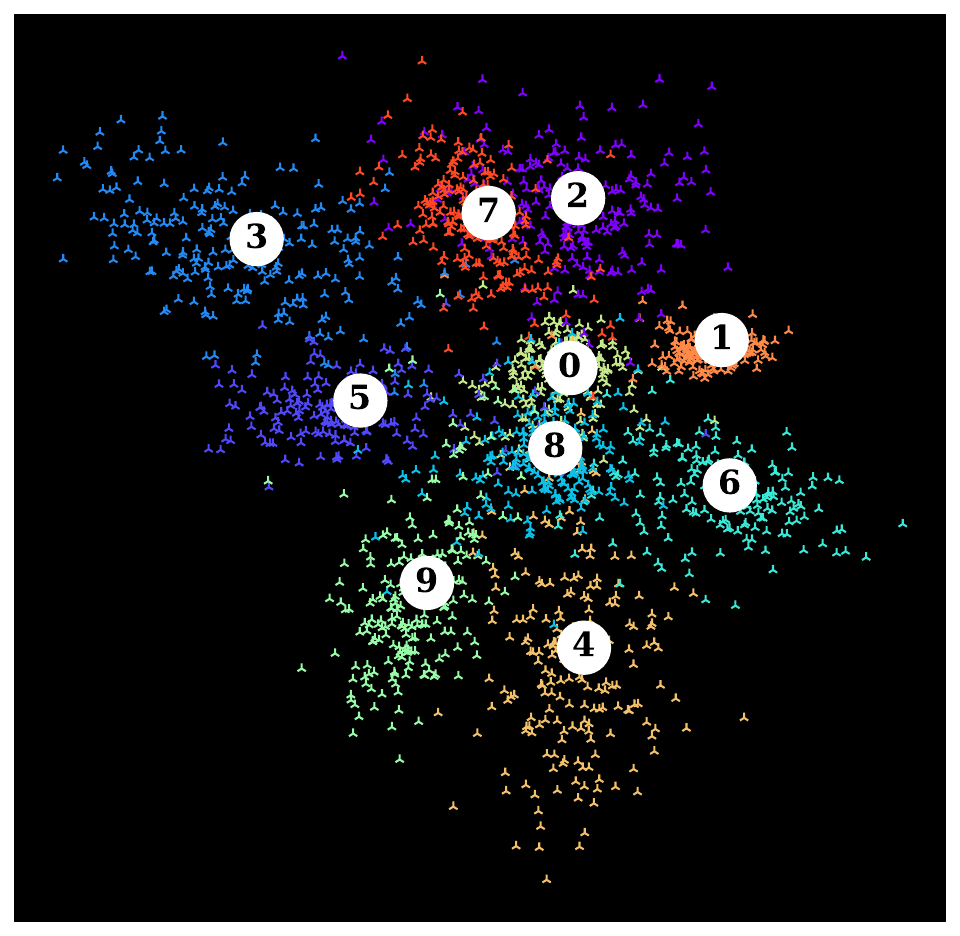}&
\includegraphics[width=0.25\textwidth]{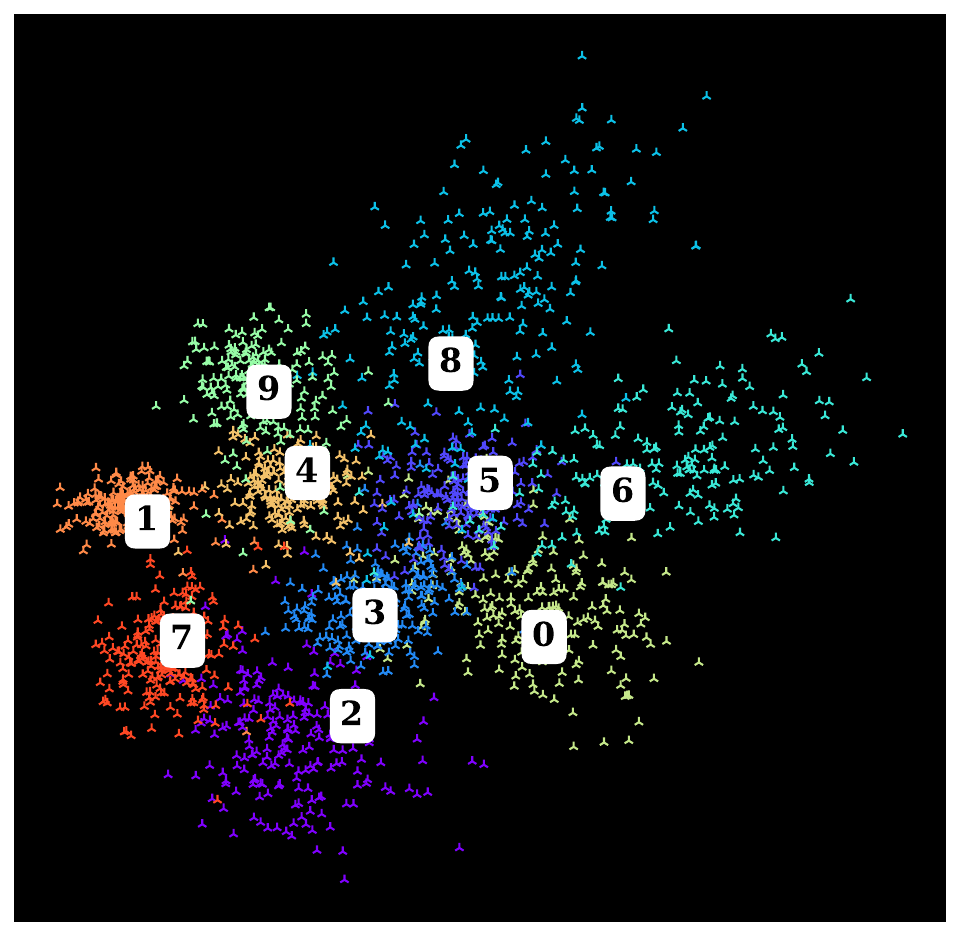}&
\includegraphics[width=0.25\textwidth]{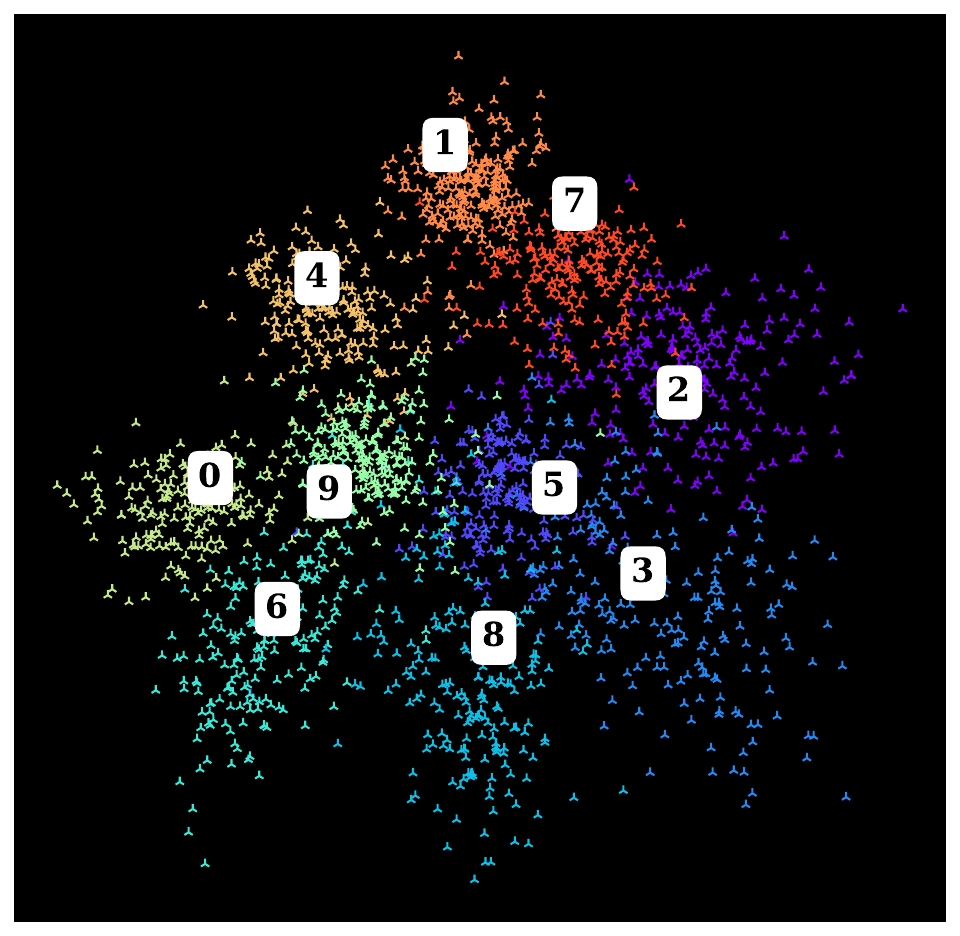}&
\hspace{-.5cm}
\includegraphics[width=0.12\linewidth, trim= 5cm 2.5cm 17.5cm 2.5cm, clip]{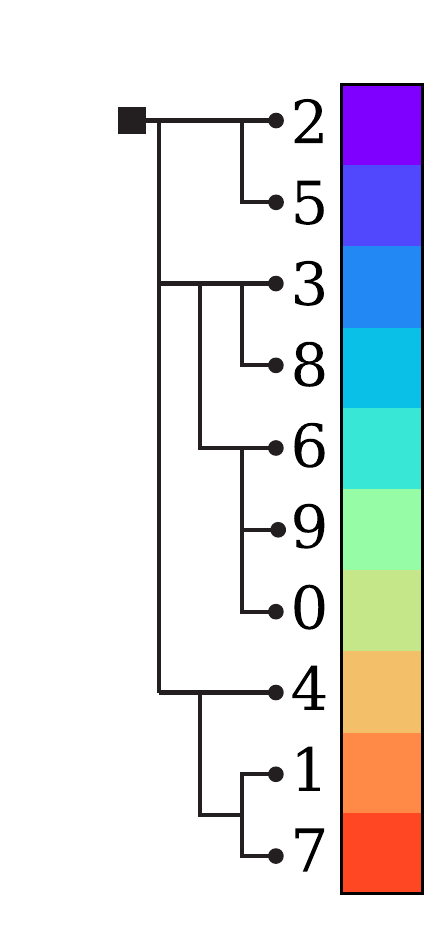}\\
(a) Cross entropy, & (b) Learnt prototypes, & (c) Guided prototypes,& \\
distortion$^1=0.47$,& distortion$=0.42$,& distortion$=0.22$,& \\
ER$=15.2\%$, AHC $=0.81$&ER$=14.2\%$, AHC $=0.75$&ER$=11.9\%$, AHC $=0.52$&\\
\end{tabular}
\vspace{.2cm}
\caption{\small{Mean class representation
	 \protect\tikz \protect\node[circle, thick, draw = black, fill = none, scale = 0.7] {};
	 , prototypes
	 \protect\tikz \protect\node[thick, draw = black, fill = none, scale = 0.9, rounded corners=0.03cm] {};
	 , and $2$-dimensional embeddings
	 \protect\begin{tikzpicture} 
	 \protect\draw [very thick, orange] (0,0.05) -- (0,0.15);
	 \protect\draw [very thick, orange] (-0.075,0) -- (0,0.05);
	 \protect\draw [very thick, orange] (+0.075,0) --(0,0.05);
	 \protect\end{tikzpicture}
	  learnt on perturbed MNIST by a 3-layer convolutional net 
	  with three different classification modules: (a) cross-entropy,
	  (b) learnt prototypes, and (c) learnt prototypes guided by a tree-shaped taxonomy (constructed according to the authors' perceived visual similarity between digits).
	The guided prototypes (d)
	embed more faithfully the class hierarchy: classes with low error cost are closer. This is associated with a decrease in the \emph{Average Hierarchical Cost} (AHC), as well as \emph{Error Rate} (ER), indicating that our taxonomy may contain useful information for learning better visual features.
	}}
\label{fig:viz}
\end{figure}

As pointed out by \citet{bertinetto2020making}, the first step towards algorithms aware of hierarchical structures would be to generalize the use of cost-based metrics.
For example, early iterations of the ImageNet challenge \citep{russakovsky2015imagenet, deng2010does} proposed to weight errors according to hierarchy-based costs. For a dataset indexed by $\cN$, the \emph{Average Hierarchical Cost} (AHC) between class predictions $y\in \cK^\cN$ and the true labels $z\in \cK^\cN$ is defined as:
\begin{equation}
\text{AHC}(y ,z) = \frac{1}{|\mathcal{N}|} \sum_{n \in \cN} D[y_n, z_n]~.
\end{equation}
Along with the evaluation metrics, the loss functions should also take the cost matrix into account.
While it is common to focus on retrieving certain classes through weighting \cite{lin2017focal, bulo2017loss} or sampling \cite{sung1996learning,shrivastava2016training} schemes, preventing confusion between specific classes is less straightforward.
For example, the cross entropy with one-hot target vectors singles out the predicted confidence for the true class, but treats all other classes equally.
Beyond reducing the AHC, another advantage of incorporating the class hierarchy into the learning phase is that $D$ may contain information about the structure of the data as well. Although it is not always the case, co-hyponyms (\ie siblings) in a class hierarchy tend to share some structural properties. Encouraging such classes to have similar representations could lead to more efficient learning, \eg by leveraging common feature detectors.
Such priors on the class structure may be especially crucial when dealing with a large taxonomy, as noted by \citet{deng2010does}. 

\setcounter{footnote}{1}
\footnotetext{For a formal definition of scale-free distortion, see \secref{sec:disto}; the distortion is computed with respect to the means of class embeddings for the cross entropy.}

In this paper, we introduce a method to integrate a pre-defined
class hierarchy into a classification algorithm.
We propose a new distortion-based regularizer for prototypical network \citep{yang2018robust, chen2019thislooks}. This penalty allows the network to learn prototypes organized so that their 
pairwise distances reflect the error cost defined by a class hierarchy.
Our contributions are as follows:
\begin{itemize}
    \item We introduce a scale-independent formulation of the distortion between two metric spaces and an associated smooth regularizer.
    \item This formulation allows us to incorporate knowledge of the class hierarchy into a neural network at no extra cost in trainable parameters and computation.
    \item We show on four public datasets (CIFAR100 , NYUDv2, S2-Agri, and iNaturalist-19) that our approach decreases the average cost of the prediction of standard backbones.
    \item As illustrated in \figref{fig:viz}, we show that our approach can also lead to a better (unweighted) precision, which we attribute to the useful priors contained in the hierarchy.
\end{itemize}
\section{Related Work}
%
\paragraph{Prototypical Networks:} Our approach builds on the growing corpus of work on prototypical networks. These models are deep learning analogues of nearest centroid classifiers \citep{tibshirani2002diagnosis} and Learning Vector Quantization networks \citep{sato1995generalized, kohonen1995learning}, which associate to each class a representation, or prototype, and classify the observations according to the nearest prototype.
These networks have been successfully used for few-shot learning \citep{snell2017prototypical, dong2018few}, zero-shot learning \citep{prototypicalpriors}, and supervized classification \citep{guerriero2018deepNCM, yang2018robust, mettes2019hyperspherical,  chen2019thislooks}.

In most approaches, the prototypes are directly defined as the centroid of the learnt representations of samples of their classes, and updated at each episode \citep{snell2017prototypical} or iteration \citep{guerriero2018deepNCM}. In the work of \citet{mettes2019hyperspherical} and \citet{prototypicalpriors}, the prototypes are defined prior to learning the embedding function. In this work, we follow the approach of \cite{yang2018robust} and {learn} the prototypes simultaneously with the data embedding function. 
\vspace{-.2cm}
\paragraph{Hierarchical Priors:}
The idea of exploiting the latent taxonomic structure of semantic classes to improve the accuracy of a model has been extensively explored \citep{silla2011survey}, from traditional Bayesian modeling \citep[Chapter~5]{gelman2013bayesian} to adaptive deep learning architectures \citep{yan2015hd, roy2020tree, salakhutdinov2012learning, ayub2011ecg}. However, for these neural networks, the hierarchy is discovered by the network itself to improve the overall accuracy of the model. In our setting, the hierarchy is defined a priori and serves both to evaluate the quality of the model and to guide the learning process towards a reduced prediction cost.

\citet{srivastava2013treepriors} propose to implement Gaussian priors on the weight of neurons according to a fixed hierarchy.
\citet{redmon2017yolo9000} implements an inference scheme based on a tree-shaped graphical model derived from a class taxonomy. 
Closest to our work, \citet{hou2016squared} propose a regularization based on the earth mover distance to penalize errors with high cost.

More recently, \citet{bertinetto2020making} highlighted the relative lack of well-suited methods for dealing with hierarchical nomenclatures in the deep learning literature. They advocate for a more widespread use of the AHC for evaluating models, and detail two simple baseline classification modules able to decrease the AHC of  deep models: \emph{Soft-Labels} and \emph{Hierarchical Cross-Entropy}. See the appendix for more details on these schemes. 
Following this objective, \citet{karthik2021no} propose a an inference-time risk minimization scheme to reduce the AHC of the predictions based on the predicted posteriors.
\vspace{-.2cm}
\paragraph{Hyperbolic Prototypes:}
Motivated by their capacity to embed hierarchical data structures into low-dimensional spaces,
\citep{de2018representation}, hyperbolic spaces are at the center of recent advances in modeling hierarchical relations \citep{nickel2017poincare, khrulkov2020hyperbolic}.
Closer to this work, \citep{liu2020hyperbolicvisual, long2020actionhyperbole} also propose to embed a class hierarchy into the latent representation space. However, both approaches embed the class hierarchy before training the data embedding network. In contrast, we argue that incorporating the hierarchical structure during the training of the model allows the network and class embeddings to share their respective insights, leading to a better trade-off between AHC and accuracy. 
In this paper, we only explore Euclidean geometry, as this setting allows for the seamless integration of our method without changing the number of bits of precision or the optimizer \cite{de2018representation}.
\vspace{-.2cm}
\paragraph{Finite Metric Embeddings:} Our objective of computing class representations with pairwise distances determined by a cost matrix has links with finding an isometric embedding of the cost matrix---seen as a finite metric. This problem has been extensively studied \citep{indyk20178, bourgain1985lipschitz} and is at the center of the growing interest for hyperbolic geometry \citep{de2018representation}. Here, our goal is simply to influence the learning of prototypes with a metric rather than necessarily seeking the best possible isometry.
%
\section{Method}
We consider a generic dataset $\cN$ of $N$ elements $x \in \cX^\cN$ with ground truth classes $z \in \cK^\cN$.
The classes $\cK$ are organized along a tree-shape hierarchical structure, allowing us to define a cost matrix $D$ by considering the shortest path between nodes.
The matrix thus defined is symmetric, with a zero diagonal, strictly positive elsewhere, and respects the triangle inequality: $D[k,l] + D[l,m] \geq D[k,m]$ for all $k,l,m$ in $\cK$. In other words, $D$ defines a finite metric. We denote by $\Omega$ an \emph{embedding space} which, when equipped with the distance function $d:\Omega\times\Omega \mapsto \mathbb{R}_+$, forms a continuous metric space.
%
\subsection{Prototypical Networks}
A prototypical network is characterized by an embedding function $f:\cX\mapsto\Omega$, typically a neural network, and a set $\pi\in\Omega^\cK$ of $K$ prototypes.
$\pi$ must be chosen such that any sample $x_n$ of true class $k$ has a representation $f(x_n)$ which is \emph{close} to $\pi_k$ and \emph{far} from other prototypes.

Following the methodology of \citet{snell2017prototypical}, a prototypical network $(f,\pi)$ associates to an observation $x_n$ the posterior probability over its class $z_n$ defined as follows:
\begin{align}
p(z_n=k| x_n)   &= 
\frac{\exp{\left(-d\left(f(x_n),\pi_{k}\right)\right)}}
{\sum_{l \in \mathcal{K}}\exp{\left(-d\left(f(x_n), \pi_{l}\right)\right)}}~, \forall k \in \cK \label{eq:proba}
\end{align}
We define an associated loss as the normalized negative log-likelihood of the true classes: 
\begin{align}
\Ldata(f,\pi)
&=
\frac1{N} \sum_{n \in \cN}
\left( d(f(x_n), \pi_{z_n})
+
\log\left(\sum_{l\in \cK} \exp{(-d(f(x_n), \pi_{l}))}\right)\right)~.
\label{eq:ldata_long}
\end{align}

This loss encourages the representation $f(x_n)$ to be close to the prototype of the class $z_n$ 
and far from the other prototypes.
Conversely, the prototype $\pi_k$ is drawn towards the representations $f(x_n)$ of samples $n$ with true class $k$, and away from the  representations of samples of other classes.

Following the insights of \cite{yang2018robust}, the embedding function $f$ and the prototypes $\pi$ are learned {simultaneously}. This differs from many works on prototypical networks which learn prototypes separately or define them as centroids of representations.
We take advantage of this joint training to learn prototypes which take into account both the distribution of the data and the relationships between classes, as described in the next section.
\subsection{Metric-Guided Penalization}
\label{sec:disto}
We propose to incorporate the cost matrix $D$ into a regularization term in order to encourage the prototypes' positions in the embedding space $\Omega$ to be consistent with the finite metric defined by $D$.
Since the sample representations are attracted to their respective prototypes in \eqref{eq:ldata_long}, such regularization will also affect the embedding network.
\vspace{-.2cm}
\paragraph{Metric Distortion}
As described in \citet{de2018representation}, the distortion of a mapping $k \mapsto \pi_k$ between the finite metric space $(\cK, D)$ and the continuous metric space $(\Omega,d)$ can be defined as the average relative difference between distances in the source and target space:
\begin{align}
\text{disto}(\pi,D) &= \frac1{K(K-1)}
\sum_{k,l \in \cK^2,\,k\neq l} \frac{\left\vert d(\pi_{k},\pi_{l}) - D[k,l]\right\vert}{D[k,l]}~.\label{eq:disto}
\end{align}
We argue that a network $(f,\pi)$ trained to minimize $\Ldata$ and whose prototypes $\pi$ have a low distortion with respect to $D$ should produce errors with low 
hierarchical costs. 
To understand the intuition behind this idea, let us consider a sample $x_n$ of true class $k$ and  misclassified as class $l$.  
This tells us that the distance between $f(x_n)$ and $\pi_l$ is small.
If $k$ and $l$ have a high cost according to $D$, and since $k\mapsto \pi_k$ is of low distortion, then $d(\pi_k,\pi_l)$ must be large. The triangular inequality tells us that 
$d(f(x_n),\pi_k) \geq d(\pi_k,\pi_l) - d(f(x_n),\pi_l)$, and consequently that $d(f(x_n), \pi_k)$ must be large as well, which contradicts that $(f,\pi)$ minimizes $\Ldata$.


\vspace{-.2cm}
\paragraph{Scale-Free Distortion}
For a prototype arrangement $\pi$ to have a small distortion with respect to a finite metric $D$ as defined in \equaref{eq:disto}, the distance between prototypes must correspond to the distance between classes. This imposes a specific scale on the distances between prototypes in the embedding space.
This scale may conflict with the second term of $\Ldata$ which encourages the distance between embeddings and unrelated prototypes to be as large as possible. Therefore, lower distortion may also cause lower precision. To remove this conflicting incentive, we introduce a scale-independent formulation of the distortion \eqref{eq:scalefree} where $s \cdot \pi$ are the scaled prototypes, whose coordinates in $\Omega$ are multiplied by a scalar factor $s$. As shown in the appendix, $\text{disto}^{\text{scale-free}}$ can be efficiently computed algorithmically.

\begin{align}
 \text{disto}^{\text{scale-free}}(\pi,D) &= \min_{s \in \bR_+}{\text{disto}(s \cdot \pi, D)}~,
 \label{eq:scalefree}
\end{align}
\vspace{-.2cm}
\paragraph{Distortion-Based Penalization}
We propose to incorporate the error qualification $D$ into the prototypes' relative arrangement by encouraging a low \emph{scale-free} distortion between $\pi$ and $D$. To this end, we define $\Ldisto$, a smooth surrogate of $\text{disto}^{\text{scale-free}}$ \eqref{eq:loss:distosmooth}. as detailed in the appendix,  $\Ldisto$ can be computed in closed form as a function of $\pi$ and can be directly used as a regularizer. 
\begin{align}
\label{eq:loss:distosmooth}
    \Ldisto(\pi) &= \frac{1}{K(K-1)} \min_{s \in \bR_+} 
\sum_{k,l \in \cK^2,\,k\neq l} \left(\frac{ s 
d(\pi_{k},\pi_{l}) - D[k,l]}{D[k,l]}\right)^2 
~.
\end{align}
 
\subsection{End-to-end Training}
We combine $\Ldata$ and $\Ldisto$ in a single loss $\La$. $\Ldata$  allows to jointly learn the embedding function $f$ and the class prototypes $\pi$, while $\Ldisto$  enforces a metric-consistent prototype arrangement, with $\lambda\in \bR_+$ an hyper-parameter setting the strength of the regularization:
\begin{equation}
    \La(f, \pi) = \Ldata(f, \pi) + \lambda 
    \Ldisto(\pi)~. \label{eq:loss:total}
\end{equation}

%
\section{Experiments}
\subsection{Datasets and Backbones}
%
\begin{table}[h]
\caption{\small{Data composition and taxonomies of the four studied datasets. IR stands for the Imbalance Ratio (largest over smallest class count), nodes and leaves denote respectively the total number of classes and leaf-classes in the tree-shape hierarchy, ABF stands for the Average Branching Factor, and $\langle D\rangle$ stands for the average pairwise distance.}}
\centering
\vspace{.2cm}
\small{
\begin{tabular}{@{}l r  r r c c  c c  r @{}}
\multirow{2}{*}{Dataset} & \multicolumn{3}{c}{Data} & \phantom{abc}&\multicolumn{3}{c}{Hierarchical Tree} \\ \cmidrule{2-4} \cmidrule{6-8}
 & Volume (Gb) &{Samples}& IR &  &Depth & Nodes (leaves)& ABF & $\langle D\rangle$\\\midrule
NYUDv2  & 2.8 & 1449&  93&& 3 & 57 (40) & 5.0 & 4.3 \\
S2-Agri  & 28.2 & 189$\;$971& 617 && 4 & 83 (45) & 5.8 & 6.5\\
CIFAR100  & 0.2 & 60$\;$000&1 & &5 & 134 (100) &3.8 & 7.0\\
iNat-19  & 82.0 & 265$\;$213&31 && 7 & 1189 (1010) & 6.6 & 11.0\\
\bottomrule
\end{tabular}
}
\label{tab:nomenc}
\end{table}
\label{sec:datasets}
We evaluate our approach with different tasks and public datasets with fine-grained class hierarchies: image classification on CIFAR100 \citep{cifar100} and iNaturalist-19 \citep{van2018inaturalist}, RGB-D image segmentation  on NYUDv2 \citep{nyuv2}, and image sequence classification on S2-Agri \citep{sainte2019satellite}.
We define the cost matrix of these class sets as the length of the shortest path between nodes in the associated tree-shape taxonomies represented in the Appendix.
As shown in \tabref{tab:nomenc}, these datasets cover different settings in terms of data distribution and hierarchical structure.
\vspace{-.5cm}
\paragraph{Illustrative Example on MNIST:} \VIVIEN{In \figref{fig:viz}, we illustrate the difference in performance and embedding organization of the embedding space for different approaches. We use a small 3-layer convolutional net trained on MNIST with random rotations (up to $40$ degrees) and affine transformations (up to $1.3$ scaling). For plotting convenience, we set the features' dimension to $2$.}
\vspace{-.3cm}
\paragraph{Image Classification on CIFAR100:}
We use a super-class system inspired by \citet{cifar100} and form a $5$-level hierarchical nomenclature of size:  $2$, $4$, $8$, $20$, and $100$ classes. We use as backbone the established ResNet-18 \citep{he2016deep} as embedding network for this dataset.
\vspace{-.3cm}
\paragraph{RGB-D Semantic Segmentation on NYUDv2:}
We use the standard split of $795$ training and $654$ testing pairs.
We combine the $4$ and $40$ class nomenclatures of \citet{gupta2013perceptual} and the $13$ class system defined by \citet{handa2015scenenet}
to construct a $3$-level hierarchy. We use FuseNet \citep{hazirbas2016fusenet} as backbone for this dataset. 
\vspace{-.3cm}
\paragraph{Image Sequence Classification on S2-Agri:}
S2-Agri is composed of $189\;971$ sequences of multi-spectral satellite images of agricultural parcels. 
We define a $4$-level crop type hierarchy of size $4$, $12$, $19$, and $44$ classes with the help of experts from a European agricultural monitoring agency (ASP).
We use the PSE+TAE architecture \citep{sainte2019satellite} as backbone, and follow their $5$-fold cross-validation scheme for training. \LOIC{Crop mapping in particular benefits from predictions with a low hierarchical cost. Indeed, payment agencies  monitor the allocation of agricultural subsidies  and whether crop rotations follow best practice recommendations \cite{grant1997common}. The monetary and environmental impact of misclassifications are typically reflected in the class hierarchy designed by domain experts \cite{brankatschk2015modeling, bullock1992crop}. By achieving a low AHC, we ensure that these downstream tasks can be meaningfully realized from the predictions.}
\vspace{-.3cm}
\paragraph{Fine-Grained Image Classification on iNaturalist-19 (iNat-19)} 
iNat-19 \citep{van2018inaturalist} contains $1\,010$ different classes organized into a $7$ level hierarchy with respective width $3$, $4$, $9$, $34$, $57$, $72$, and $1\,010$. We use ResNet-18 pretrained on ImageNet as backbone. We sample $75\%$ of available images for training, while the rest is evenly split into a validation and test set.
\subsection{Hyper-Parameterization}
The embedding space $\Omega$ is chosen as $\bR^{512}$ for iNat-19 and $\bR^{64}$ for all other datasets. 
We chose $d$ as the Euclidean norm.
(see \ref{sec:ablation} for a discussion on this choice).
We evaluate our approach (Guided-proto) with $\lambda=1$ in \eqref{eq:loss:total} for all datasets.
We use the same training schedules and learning rates as the backbone networks in their respective papers.
In particular, the class imbalance of S2-Agri is handled with a focal loss \citep{lin2017focal}.
\subsection{Competing methods}
\label{sec:compmet}
In the paper where they are introduced, all backbone networks presented in \secref{sec:datasets} use a linear mapping between the sample representation and the class scores, as well as the cross-entropy loss.
The resulting performance defines a baseline, denoted as \texttt{Cross-Entropy}, and is used to estimate the gains in Average Hierarchical Cost (AHC) and Error Rate (ER) provided by different approaches.
{We reimplemented other competing methods: Hierarchical Cross-Entropy (\texttt{HXE}) \cite{bertinetto2020making}, Soft Labels \cite{bertinetto2020making}, Earth Mover Distance regularization (\texttt{XE+EMD}) \cite{hou2016squared}, Hierarchical Inference (\texttt{YOLO}) \cite{redmon2017yolo9000}, Hyperspherical Prototypes (\texttt{Hyperspherical}-\texttt{proto}) \cite{mettes2019hyperspherical}, and Deep Mean Classifiers (\texttt{Deep-NCM}) \cite{guerriero2018deepNCM}. See the Appendix for more details on these methods. Lastly, we evaluate simple prototype learning (\texttt{Learnt-proto}) \cite{yang2018robust} by setting $\lambda=0$ in \eqref{eq:loss:total}}.
\subsection{Analysis}
\begin{figure}[t]
    \centering
    \begin{tikzpicture}
    \node[anchor=south west,inner sep=0] at (0,0) {\includegraphics[width=\linewidth, trim=3.5cm 1cm 3.5cm 1cm , clip]{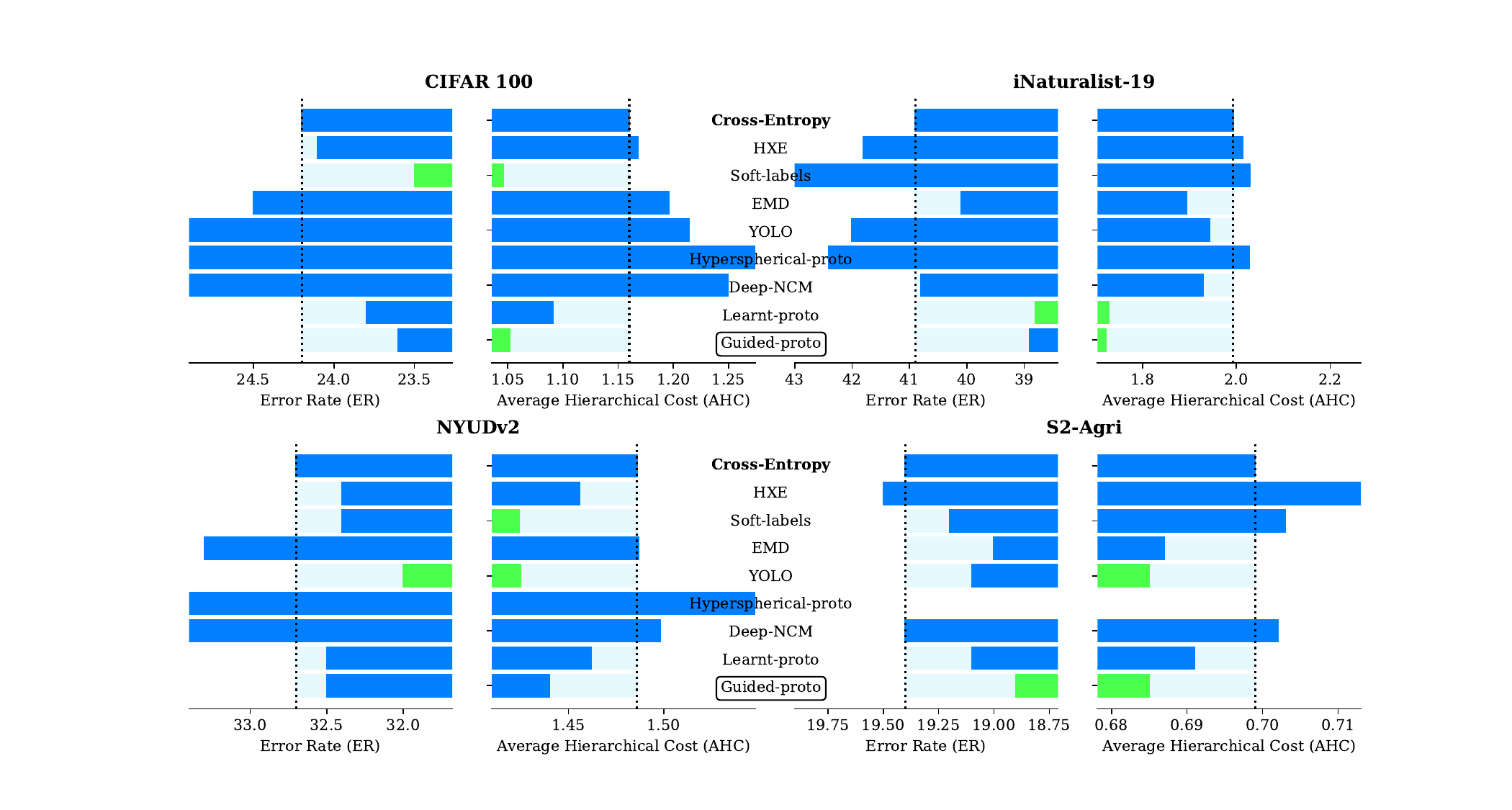}};
    \node[draw=none, fill=none, text=black,scale=0.7] at (9.85,
    1.78) {$\star$};
    \node[draw=none, text=black, scale=0.65] at (7.4, .88)  {\RS{Ilir}};
    \node[draw=none, text=black, scale=0.65] at (7.65, 1.78) {\RS{Ilir}};
    \node[draw=none, text=black, scale=0.65] at (7.0, 2.08) {\RS{Ilir}};
    \node[draw=none, text=black, scale=0.65] at (7.0, 2.38) {\RS{Ilir}};
    \node[draw=none, text=black, scale=0.65] at (7.2, 2.68) {\RS{Ilir}};
    \node[draw=none, text=black, scale=0.65] at (7.0, 2.98) {\RS{Ilir}};
    
    \node[draw=none, text=black, scale=0.65] at (7.4, 4.52) {\RS{Ilir}};
    \node[draw=none, text=black, scale=0.65] at (7.65, 5.42)  {\RS{Ilir}};
    \node[draw=none, text=black, scale=0.65] at (7.0, 5.72) {\RS{Ilir}};
    \node[draw=none, text=black, scale=0.65] at (7.0, 6.02) {\RS{Ilir}};
    \node[draw=none, text=black, scale=0.65] at (7.2, 6.32) {\RS{Ilir}};
    \node[draw=none, text=black, scale=0.65] at (7.0, 6.62) {\RS{Ilir}};
\end{tikzpicture}
\vspace{.1cm}
\caption{\small{Error Rate (ER) in \% and Average Hierarchical Cost (AHC) on four datasets for \texttt{Guided-proto}, the \texttt{Cross-Entropy} baseline (in bold), and competing approaches. \VIVIEN{Methods that use the hierarchical knowledge are indicated with the symbol \protect\tikz \protect \node[draw=none, text=black, scale=0.5] at (0, 0) {\protect\RS{Ilir}};} . The best performances on each dataset are plotted in green. Our guided prototype approach improves both the ER and AHC across the four datasets compared to the baseline. The metrics are computed with the median over $5$ runs for CIFAR100, the average over $5$ cross-validation folds for S2-Agri, and a single run for NYUDv2 and iNat-19. The numeric values are given in the Appendix. ($\star$: not evaluated).}} 
    \label{fig:results}
\end{figure}
\paragraph{Overall Performance:} As displayed in \figref{fig:results}, the benefits provided by our approach can be appreciated on all datasets. Compared to the \texttt{Cross-Entropy} baseline, our model improves the AHC by $3\%$ on NYUDv2 and S2-Agri, and up to $9$\% and $14$\% for CIFAR100, and iNat-19 respectively.
The hierarchical inference scheme \texttt{YOLO} of \citet{redmon2017yolo9000} performs on par or better than our methods for NYUDv2 and S2-Agri, while \texttt{Soft-labels} perform well on CIFAR100 and NYUDv2. \LOIC{Yet, metric guided prototypes brings the most consistent reduction of the hierarchical cost across all  tasks, datasets, and class hierarchies  configurations.}
This suggest that arranging the embedding space consistently with the cost metric is a robust way of reducing a model's hierarchical error cost. We argue that these results, combined with its ease of implementation, make a strong case for our approach.

While being initially designed to 
reduce the AHC, our method also 
provides a relative decrease of the ER
by $3$ to $4$\% across all datasets compared to the cross-entropy baseline. This indicates that cost matrices derived from the class hierarchies can indeed help neural networks to learn richer representations.
\vspace{-.5cm}
\paragraph{Prototype Learning:}
We observe that the learnt prototype approach \texttt{Learnt-proto} consistently outperforms the \texttt{Deep-NCM} method.
This suggests that defining prototypes as the centroids of their class representations might actually be disadvantageous. As illustrated on \figref{fig:viz}, the positions of the embeddings tend to follow a Voronoi partition \citep{fortune1992voronoi} with respect to the learnt prototypes of their true class rather than prototypes being the centroid of their associated representations.
A surprising observation for us is that \texttt{Learnt-proto} consistently outperforms the \texttt{Cross-Entropy} baseline, both in terms of AHC and ER.
%
\paragraph{Computational Efficiency:} Computing distances between representations and prototypes is comparable in terms of complexity  than computing a linear mapping. \LOIC{The scaling factor in $\Ldisto$ can be efficiently obtained as described in the Appendix.}
\LOIC{In practice, we observed that both training and inference time are identical for \texttt{Cross-Entropy} and \texttt{Guided-proto}: most of the time is taken by the computation of the embeddings.}

\begin{figure}
    \centering
    \subfigure[Cross-Entropy]{
    \centering
    \includegraphics[width=.30\linewidth, trim=1cm 1.8cm .9cm 0cm, clip]{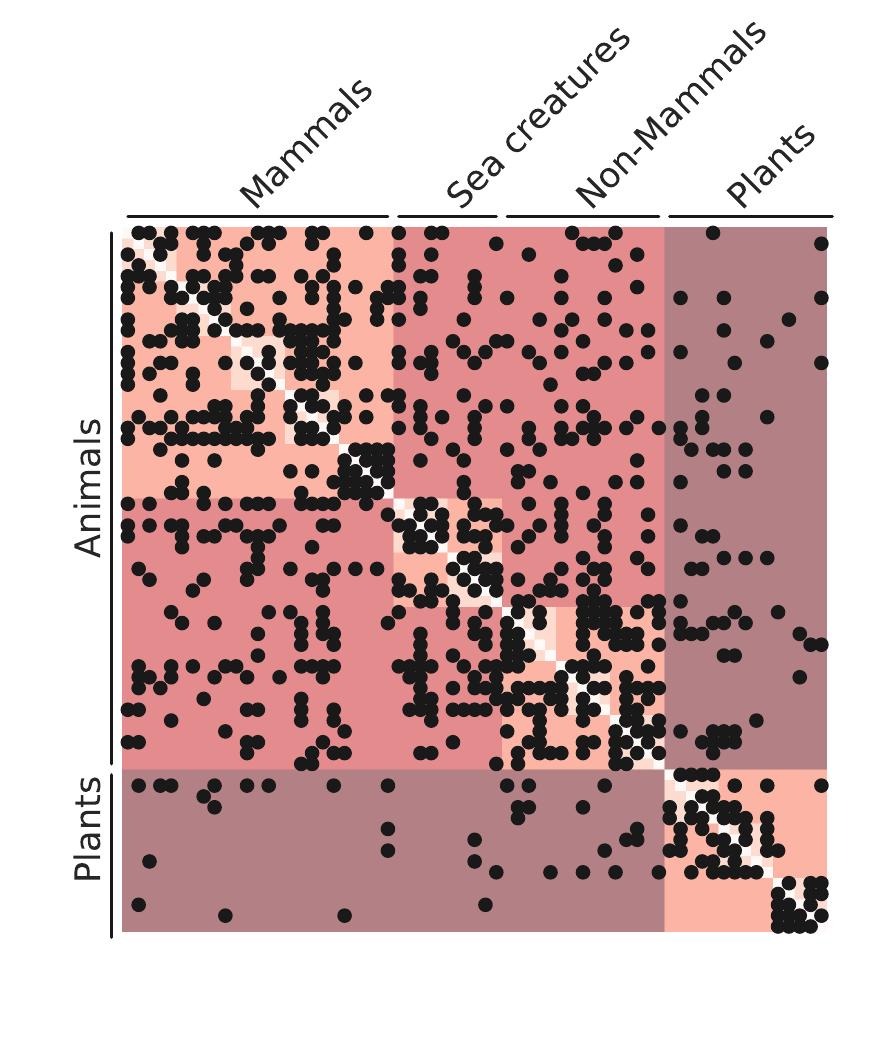}
    } 
    \hfill
    \subfigure[Metric-Guided Prototypes]{
    \centering
    \includegraphics[width=.30\linewidth, trim=1cm 1.8cm .9cm 0cm, clip]{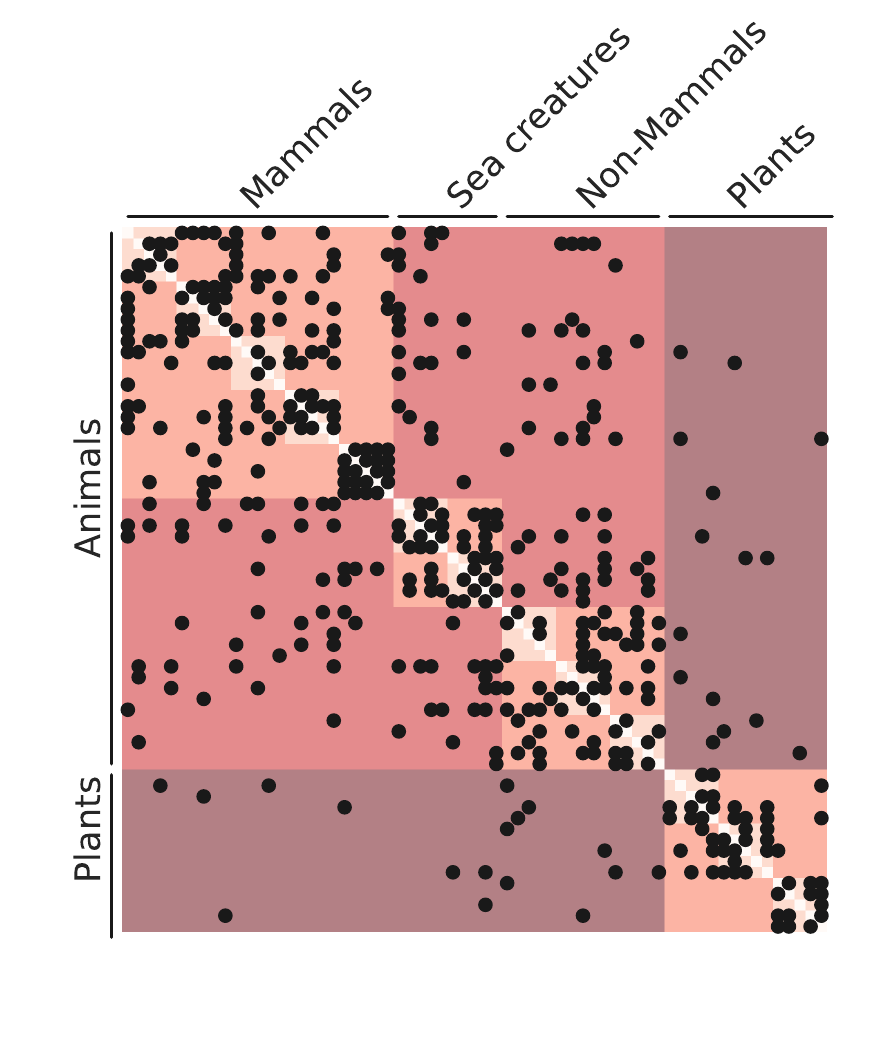}
    } 
    \hfill
    \subfigure[Hierarchical cost]{
    \qquad
    \includegraphics[width=.08\linewidth, trim=0cm 0cm 0cm 0cm, clip]{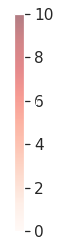}
    \qquad
    } 
    \vspace{.2cm}
    \caption{\LOIC{Partial confusion matrix for the ``living organism'' class subset of CIFAR100 for the Cross-Entropy baseline (a) and our approach (b). For readability, we only display (in black) entries of the matrices with at least one confusion. We also represent the cost of confusing different classes in shades of reds (c). We note that our approach yields fewer confusions between pairs of classes with high costs, such as plants and animals.}}
    \label{fig:confusion}
\end{figure}

\subsection{Restricted Training Data Regime}
\begin{minipage}{\linewidth}
\begin{wrapfigure}{r}{0.47\textwidth}
 \centering
    \vspace{-.6cm}
    \includegraphics[width=\linewidth]{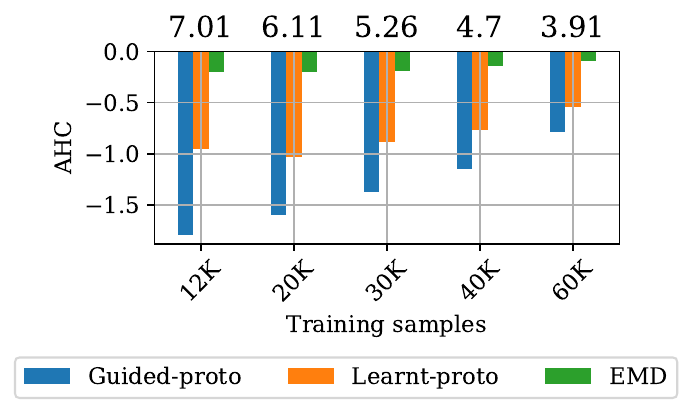}
    \vspace{.1cm}
    \caption{AHC of ResNet-18 trained on restricted training sets of iNaturalist-19 with \texttt{Guided-proto}, \texttt{Learnt-proto}, and \texttt{EMD}. 
    We represent the relative improvement compared to the performance of the \texttt{Cross-Entropy} baseline, which is shown on top of the plots.}
    \vspace{-.1cm}
    \label{fig:ldr_inat}
\end{wrapfigure}

We observed that the \texttt{Learnt-proto} method decreases the AHC across all four datasets even though it does not take the cost matrix into account. This suggests that, given enough data, this simple model can 
learn an empirical taxonomy through its prototypes' arrangement. Furthermore, this taxonomy can share enough similarity with the one designed by experts to result in a decrease in AHC.
To further evaluate the benefit of explicitly using the expert taxonomy with our approach, we train the models \texttt{Learnt-proto}, \texttt{Guided-proto}, and \texttt{EMD} with only part of the $160$k images in the training set of iNat-19, and without pretraining on ImageNet.
To compensate for the lack of data, we increase the regularization strength to $\lambda=20$.
\end{minipage}

In \figref{fig:ldr_inat}, we observe that the two prototype-based approaches consistently improve the performance of the baseline for all training set sizes in terms of AHC. Moreover, the advantages brought by our proposed regularization are all the more significant when applied to small training sets.
This observation reinforces the idea that the \texttt{learnt-proto} method requires large amounts of data to learn a meaningful class hierarchy in an unsupervized way.

\subsection{Ablation Study}
\vspace{-.4cm}
In the Appendix, we present an extensive ablation study. We present here its main take-away.
\vspace{-.9cm}
\paragraph{Scale-Free Distortion:}
Our method for automatically choosing the best scale in our smooth distortion surrogate leads to an improvement of $0.9$ ER on the iNat-19 dataset, which amounts to half the improvement compared to the baseline. In the other datasets, the improvements were more limited. 
We attribute the impact 
of our scale-free distortion 
on iNat-19 in particular to the structure of its class hierarchy: 
at the lowest level
, iNat-19 classes have on average $14$ co-hyponyms (siblings), compared to only $2$ to $5$ for the other datasets. 
When minimizing the distortion with a fixed scale of $1$, the prototypes of hyponyms are incentivized to be close with respect to $d$ since hyponyms have a small hierarchical distance of $2$.
This clashes with the minimization of the second part of $\Ldata$ as defined in $\eqref{eq:ldata_long}$, which mutually repels prototypes of different classes. This conflict, made worse by classes with many hyponyms, is removed by our scale-free distortion. \VIVIEN{See the Appendix 
for additional insights into how our automatic scaling addresses this issue. }
\vspace{-.6cm}
\paragraph{Choice of Metric Space:}
Prototypical networks operating on $\Omega=\bR^m$ typically use the squared Euclidean norm in the distance function, motivated by its quality as a Bregman divergence  \citep{snell2017prototypical}.
However, given the large distance between prototypes induced by our regularization, this metric can cause stability issues. We observe for all datasets that that defining $d$ as the Euclidean norm yields significantly better results across all datasets. 
\vspace{-.6cm}
\paragraph{Guided vs. fixed prototypes :} As suggested by the lower performance of \texttt{Hypersphe-} \texttt{rical-proto}, jointly learning the prototypes and the embedding network can be advantageous. To confirm this observation, we altered our \texttt{Guided-proto} method to first learn the prototypes and then the embedding network. We observed a significant decrease in performance across the board, up to $5$ more points of ER in iNat-19. \VIVIEN{Conversely, we altered  \texttt{Hyperspherical-proto} to learn spherical prototypes together with the embedding network. This improved the performance of \texttt{Hyperspherical-proto} even though it remained worse than the \texttt{Cross-Entropy} baseline ($+1.20$ ER, $+0.10$ AHC on CIFAR100).} These observations suggest that insights from the data distribution can benefit the positioning of prototypes, and that they should be learned conjointly.
%
\label{sec:ablation}
%

\section{Conclusion}
We introduced a new regularizer modeling the hierarchical relationships between the classes of a nomenclature.  
This approach can be incorporated into any classification network at no computational cost and with very little added code.
We showed that our method consistently decreased the average hierarchical cost of three different backbone networks on different tasks and four datasets. Furthermore, our approach can reduce the rate of errors as well. In contrast to most recent works on hierarchical classification, we showed that this joint training is beneficial compared to the staged strategy of first positioning the prototypes and then training a feature extracting network. 
A PyTorch implementation of our framework as well as an illustrative notebook are available at \url{https://github.com/VSainteuf/metric-guided-prototypes-pytorch}.


\ARXIV{\newpage}{}


\bibliography{mrp}

\newpage
\ARXIV{\section*{  \textbf{Supplementary Materials} -  Leveraging Class Hierarchies with Metric-Guided Prototype Learning }

\section{Notebook and illustration}
In \figref{fig:viz:app}, we represent the embeddings and prototypes generated by  variations of our networks as well as their respective performance. 
We note that the \emph{fixed} prototypes approach performs significantly worse than our metric-guided method. We observe that the resulting prototypes are more compact when they are learned independently, which can lead to an increase in misclassification.
We also remark that when the hierarchy contains no useful information, such as the arbitrary order of digits, the metric-based approach has a worse performance than the free (unguided) method. This is particularly drastic for the fixed prototype approach.

An illustrated notebook to reproduce this figure can be accessed at the following URL:

\url{https://colab.research.google.com/drive/1VoQfBx5q5lWFev0cwxLZ0qQOZU7Rlmb_#offline=true&sandboxMode=true}

To run this notebook locally, you can also download it from our repository:\\
\url{https://github.com/VSainteuf/metric-guided-prototypes-pytorch}.

\begin{figure}[h!]
\begin{tabular}{rl}
\begin{tabular}{cc}
	\subfigure[Cross entropy, ER$=15.2\%$
		disto$_{\text{vis}}=0.47$,
		disto$_{\text{abs}}=0.61$
		 AHC$_{\text{vis}}=0.81$,
		 AHC$_{\text{abs}}=0.65$]{
	\captionsetup{justification=centering}
		\centering
		\includegraphics[width=0.3\textwidth]{gfx/mnistv3/mnist_xe.pdf}
		\label{fig:vizapp:xe}
	}&
	\subfigure[Learnt prototypes, ER$=14.2\%$ 
		disto$_{\text{vis}}=0.42$,
		disto$_{\text{abs}}=0.58$
		AHC$_{\text{vis}}=0.75$,
		AHC$_{\text{abs}}=0.50$]{
	\captionsetup{justification=centering}
		\centering
		\includegraphics[width=0.3\textwidth]{gfx/mnistv3/mnist_free.pdf}
		\label{fig:vizapp:freeproto}
	}\\
	\subfigure[Guided prototypes, ER$=12.8\%$ 
		disto$_{\text{vis}}=0.22$
		AHC$_{\text{vis}}=0.56$
		]{
	\captionsetup{justification=centering}
		\centering
		\includegraphics[width=.3\textwidth]{gfx/mnistv3/mnist_gp_Dvis.pdf}
		\label{fig:vizapp:protshape}
	}
	&
	\subfigure[
		Fixed prototypes, ER$=21.5\%$
		disto$_{\text{vis}}=0.17$
		AHC$_{\text{vis}}=0.82$]{
	\captionsetup{justification=centering}
		\centering
		\includegraphics[width=.3\textwidth]{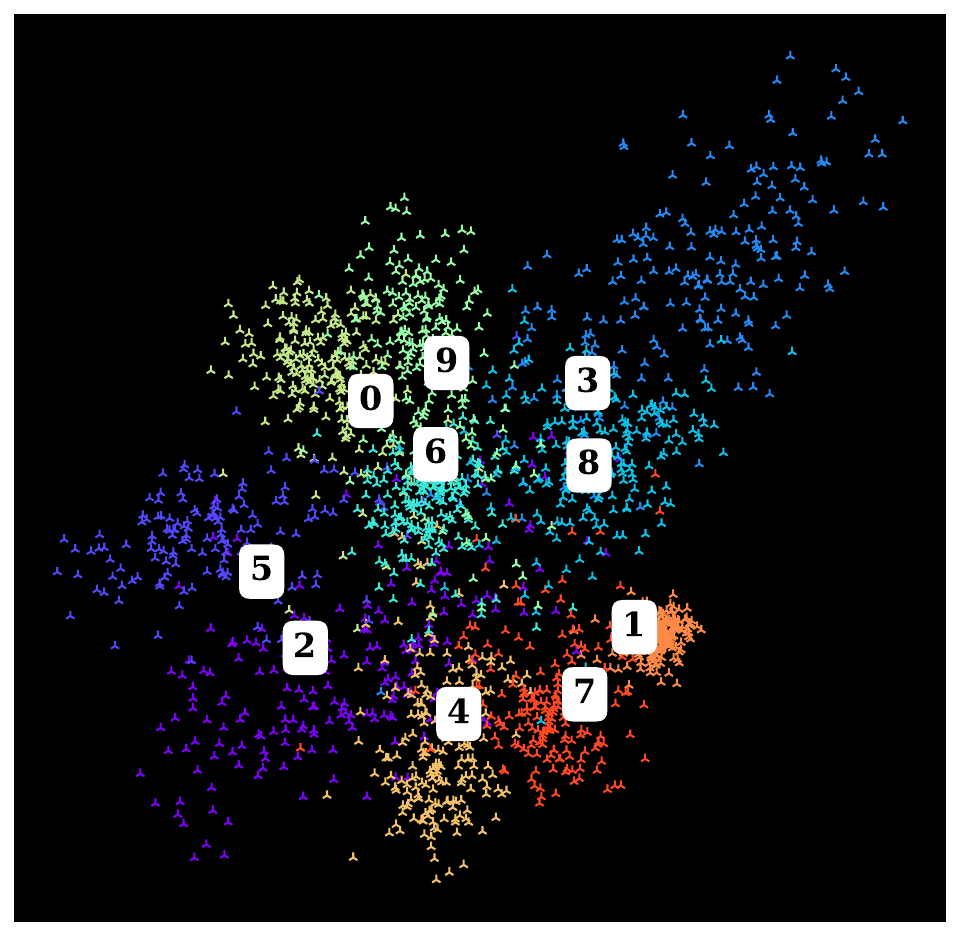}
		\label{fig:vizapp:protFshape}
	}
	\\
	\subfigure[
		Guided prototypes, ER$=16.9\%$
		disto$_{\text{abs}}=0.24$
		AHC$_{\text{abs}}=0.54$]{
	\captionsetup{justification=centering}
		\centering
		\includegraphics[width=.3\textwidth]{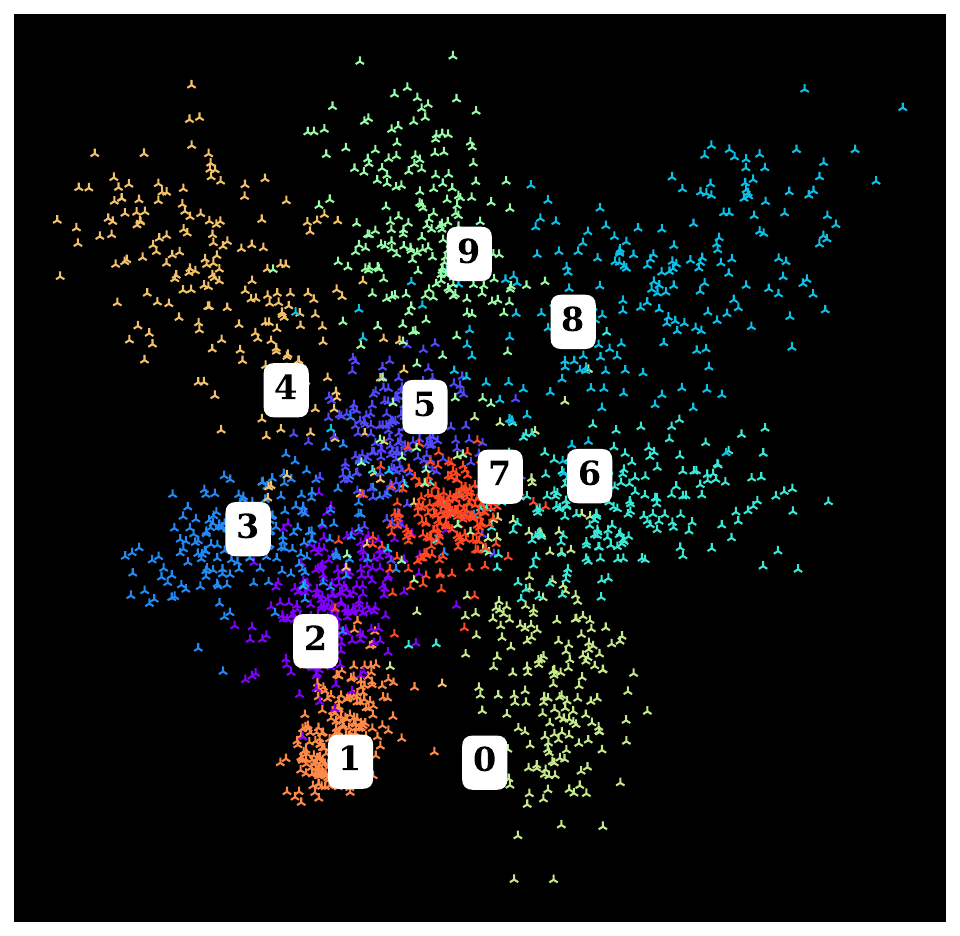}
		\label{fig:vizapp:protabs}
	}
	&
	\subfigure[Fixed prototypes, ER$=48.8\%$
		disto$_{\text{abs}}=0.00$
		AHC$_{\text{abs}}=0.80$]{
	\captionsetup{justification=centering}
		\centering
		\includegraphics[width=.3\textwidth]{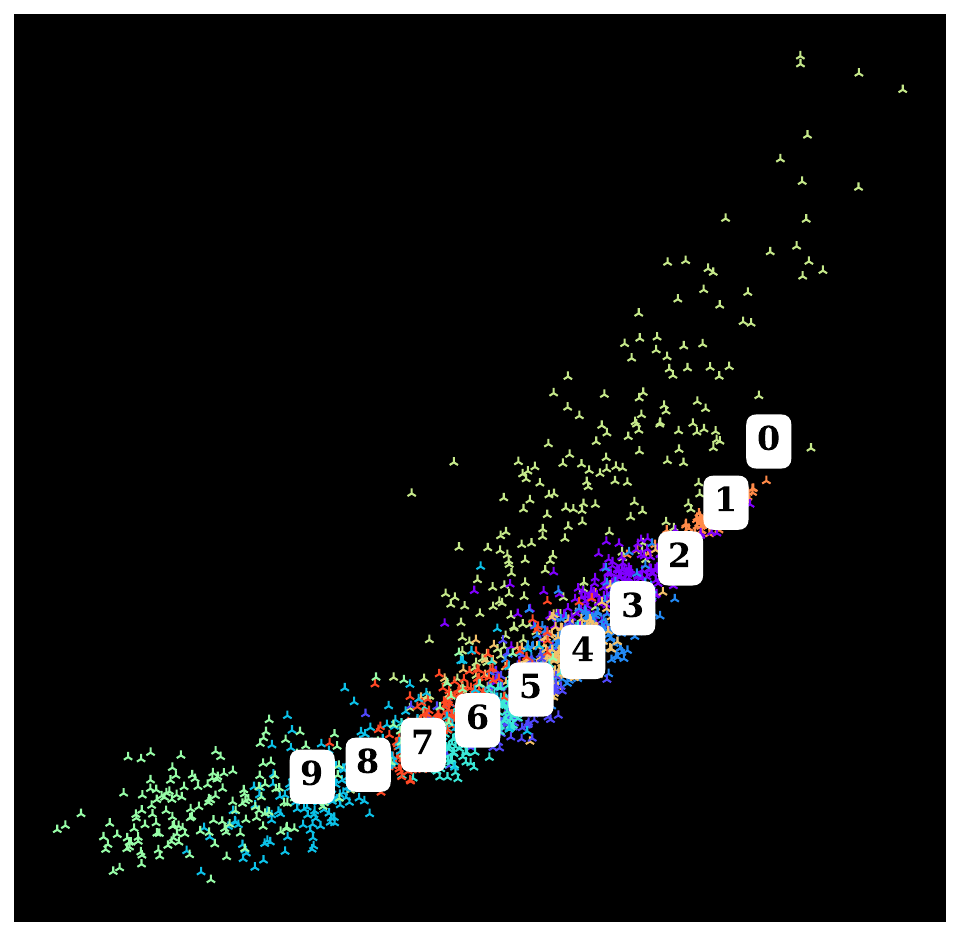}
		\label{fig:vizapp:protFabs}
	}
	\end{tabular}
	&
	\begin{tabular}{l}
	\subfigure[]{
		\centering
		\includegraphics[width=.2\textwidth, trim= 5cm 2.5cm 17.5cm 2.5cm, clip]{gfx/MNIST_tree2.pdf}
	}\\
	\subfigure[]{
		\centering~~\qquad\quad
		\includegraphics[width=.2\textwidth, trim= 0cm 5cm 22cm 0cm, clip]{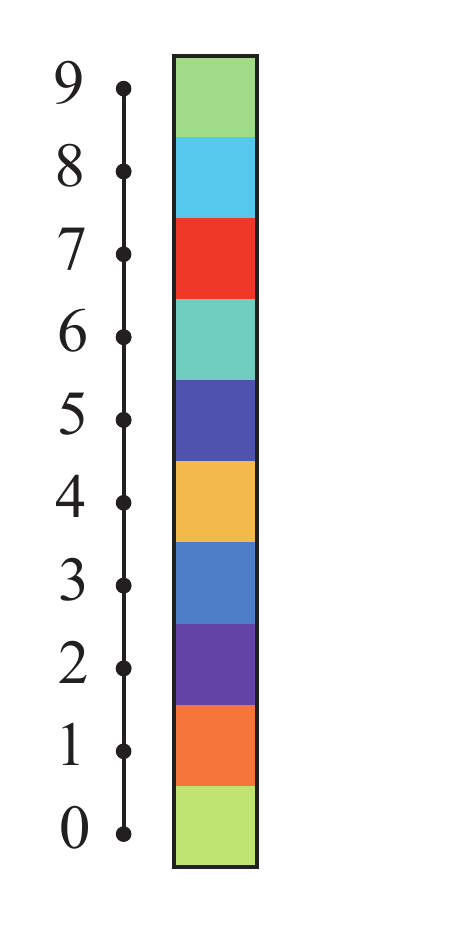}
	}
	\end{tabular}
\end{tabular}
\vspace{.1cm}
\caption[]{
	 Mean class representation
	 \protect\tikz \protect\node[circle, thick, draw = black, fill = none, scale = 0.7] {};
	 , prototypes
	 \protect\tikz \protect\node[thick, draw = black, fill = none, scale = 0.9, rounded corners=0.03cm] {};
	 , and $2$-dimensional embeddings
	 \protect\begin{tikzpicture} 
	 \protect\draw [very thick, orange] (0,0.05) -- (0,0.2);
	 \protect\draw [very thick, orange] (-0.1,0) -- (0,0.05);
	 \protect\draw [very thick, orange] (+0.1,0) --(0,0.05);
	 \protect\end{tikzpicture}
	  learnt on perturbed MNIST by a 3-layer convolutional net 
	  with six different classification modules: (a) cross-entropy, (b) learnt prototypes, (c) learnt prototypes guided by a visual taxonomy,
      (d) fixed prototypes (see \secref{sec:numres}) from a visual taxonomy ,
	  (e) learnt prototypes guided by the numbers' values, and
	  (f) fixed prototypes from the numbers' values.
	  The visual hierarchy is represented in (g) and the numerical order in (h).
       AHC$_{\text{vis}}$ corresponds to the cost defined by our proposed visual hierarchy, while AHC$_{\text{abs}}$ is defined after the chain-like  structure obtained when organizing the digits along their numerical values.
       While embedding the metric with prototypes prior to learning the representations leads to lower (scale-free) distortion, this translates into worst performance in terms of AHC and ER. Joint learning achieves better performance on both evaluation metrics.
       We also remark that when the hierarchy is arbitrary (e-f), metric guiding is detrimental to precision.
       }
	\label{fig:viz:app}
\end{figure}

\section{Additional methodological details}
\subsection{Scale-Independent Distortion}
 Computing the scale-free distortion defined in \equaref{eq:scalefree} amounts to finding a minimizer of the following function $f:\bR\mapsto\bR$:
 \begin{align}
     f(s) 
     & = \sum_{i \in I} \abs{s \alpha_{i} -1},
     \label{eq:minimzation}
 \end{align}
 with $\alpha_{k,l}=d(\pi_k, \pi_l)/D[k,l]\geq  0$, and $I$ an ordering of $\{k,l\}_{k,l \in \cK^2}$ such that the sequence $[\alpha_{i}]_{i \in I}$ is 
 non-decreasing.
 \begin{prop} A global minimizer of $f$ defined in \eqref{eq:minimzation} is given by $s^\star=1/\alpha_{k^\star}$ with $k^\star$ defined as:
 \begin{align}
    k^\star = \min
    \left\{ k \in I \,\middle|\, 
    \sum_{i \leq k} \alpha_i \geq  \sum_{i > k} \alpha_i~.
    \right\}
    \label{eq:algo}
 \end{align}
 \end{prop}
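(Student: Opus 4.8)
The plan is to use the fact that $f$ is a weighted sum of absolute values, hence convex and piecewise linear, and to characterize its minimizers through one-sided derivatives. Writing $f(s)=\sum_{i\in I}\alpha_i\,\abs{s-1/\alpha_i}$ — which is licit since each $\alpha_i>0$, being a ratio of positive distances — exhibits $f$ as a weighted-median objective for the points $\{1/\alpha_i\}$ with weights $\{\alpha_i\}$. In particular $f$ is convex, so a point $s^\star$ is a global minimizer if and only if $f'_-(s^\star)\le 0\le f'_+(s^\star)$, and the only place requiring care is that, since $[\alpha_i]_{i\in I}$ is non-decreasing, the breakpoints $1/\alpha_i$ are non-increasing in $i$, which reverses orientation.

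First I would compute the two one-sided derivatives at a candidate breakpoint $s=1/\alpha_j$. Pushing $s$ slightly below $1/\alpha_j$ makes $s\alpha_i-1<0$ for every $i\le j$ and $s\alpha_i-1>0$ for every $i>j$, so $f'_-(1/\alpha_j)=\sum_{i>j}\alpha_i-\sum_{i\le j}\alpha_i$; pushing $s$ slightly above $1/\alpha_j$ makes $s\alpha_i-1>0$ for every $i\ge j$ and $s\alpha_i-1<0$ for every $i<j$, so $f'_+(1/\alpha_j)=\sum_{i\ge j}\alpha_i-\sum_{i<j}\alpha_i$. The optimality condition $f'_-\le 0\le f'_+$ then reads exactly as the pair of inequalities $\sum_{k\le j}\alpha_k\ge\sum_{k>j}\alpha_k$ and $\sum_{k\ge j}\alpha_k\ge\sum_{k<j}\alpha_k$.

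Next I would verify that the index $i$ of \eqref{eq:algo} satisfies both. The set in \eqref{eq:algo} is non-empty (at $j=\max I$ the right-hand sum is empty), so $i$ is well defined, and the first inequality holds by the very definition of $i$. For the second: if $i=\min I$ then $\sum_{k<i}\alpha_k$ is an empty sum and the inequality is automatic; otherwise minimality of $i$ means $i-1$ fails the defining condition, i.e. $\sum_{k\le i-1}\alpha_k<\sum_{k>i-1}\alpha_k$, which is precisely $\sum_{k<i}\alpha_k<\sum_{k\ge i}\alpha_k$, a fortiori the second inequality. Hence $s^\star=1/\alpha_i$ satisfies $f'_-(s^\star)\le 0\le f'_+(s^\star)$, and convexity upgrades this to global optimality.

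The main obstacle to a fully rigorous write-up is ties among the $\alpha_i$: when $\alpha_{j-1}=\alpha_j$ the interval just below $1/\alpha_j$ is degenerate, so the derivative formulas must be read at the distinct breakpoint value rather than index by index. I would handle this by observing that the one-sided derivatives $f'_\pm$ at a point $t$ depend only on the sign of $s\alpha_i-1$ in the corresponding one-sided limit, which is unambiguous (it equals $+\alpha_i$ when $\alpha_i\ge\alpha_j$ for the right limit and when $\alpha_i>\alpha_j$ for the left limit, and $-\alpha_i$ otherwise); grouping equal $\alpha_i$ together, the same inequality comparison then goes through verbatim. Equivalently, one can note that any tie-break in the ordering $I$ leaves $f$ unchanged, so the statement is insensitive to it.
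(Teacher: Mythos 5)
Your proof is correct and takes essentially the same route as the paper's: verify the first-order optimality condition at the breakpoint $s^\star = 1/\alpha_i$ (via one-sided derivatives, which is equivalent to the paper's subgradient computation), using the defining inequality of $i$ and its minimality for the two required bounds, and then invoke convexity to upgrade to global optimality. Your explicit handling of ties among the $\alpha_i$ is a detail the paper's proof glosses over, but it refines rather than changes the argument.
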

 \begin{proof}
 First, such $k^\star$ exists as it is the smallest member of a discrete, non-empty set. Indeed, since all $\alpha_i$ are nonnegative, the set contains at least $k=\vert I \vert$.
We now verify that $s^\star=1/\alpha_k^\star$ is a critical point of $f$. By definition of $k^\star$ we have that $ \sum_{i \leq k^\star} \alpha_i \geq  \sum_{i > k^\star} \alpha_i$ and  $\sum_{i < k^\star} \alpha_i <  \sum_{i \geq k^\star} \alpha_i$.
By combining these two inequalities, we have that
 \begin{align}
 -\sum_{i < k^\star} \alpha_i + \sum_{i > k^\star} \alpha_i &\in [-\alpha_{k^\star}, \alpha_{k^\star}]~.
 \label{eq:ineq}
 \end{align}
 Since $I$ orders the $\alpha_i$ in increasing order, we can write the  subgradient of $f$ at $s^\star$ under the following form:
 \begin{align}
    \partial_s f(s^\star) 
    &= 
    \sum_{i < k^\star} \partial_s \abs{s^\star \alpha_{i} -1}
    +
    \sum_{i > k^\star} \partial_s \abs{s^\star \alpha_{i} -1}
    +
    \partial_s \abs{s^\star \alpha_{k^\star} -1} \\
    &= 
    - 
    \sum_{i < k^\star} \alpha_{i}
    +
    \sum_{i > k^\star} \alpha_{i}
    +
     [-\alpha_{k^\star},\alpha_{k^\star}].
 \end{align}
 By using the inequality defined in \equaref{eq:ineq}, we have that $0 \in \partial_s f(s^\star)$ and hence $s^\star$ is a critical point of $f$. Since $f$ is convex, such $s^\star$ is also a global minimizer of $f$, \ie an optimal scaling.
  \end{proof}
This proposition gives us a fast algorithm to obtain an optimal scaling and hence a scale-free distortion:
compute the cumulative sum of the $\alpha_{k,l}$ sorted in ascending order until the equality in \eqref{eq:algo} is first verified at index $k^\star$. The resulting optimal scaling is then given by $1/\alpha_{k^\star}$. 
\subsection{Smooth Distortion}
The minimization problem with respect to $s$ defined in \equaref{eq:loss:distosmooth} can be solved in closed form:
\begin{align}
\label{eq:loss:minimizer}
s^\star &= {\sum \frac{d(\pi_{k},\pi_{l})}{D[k,l]}}\bigg/{\sum \frac{d(\pi_{k},\pi_{l})^2}{D[k,l]^2}} ~.
\end{align}
\subsection{Evolution of Optimal Scaling}
\label{sec:apx:optscale}
\begin{figure}[h!]
    \centering
    \includegraphics[width=\linewidth]{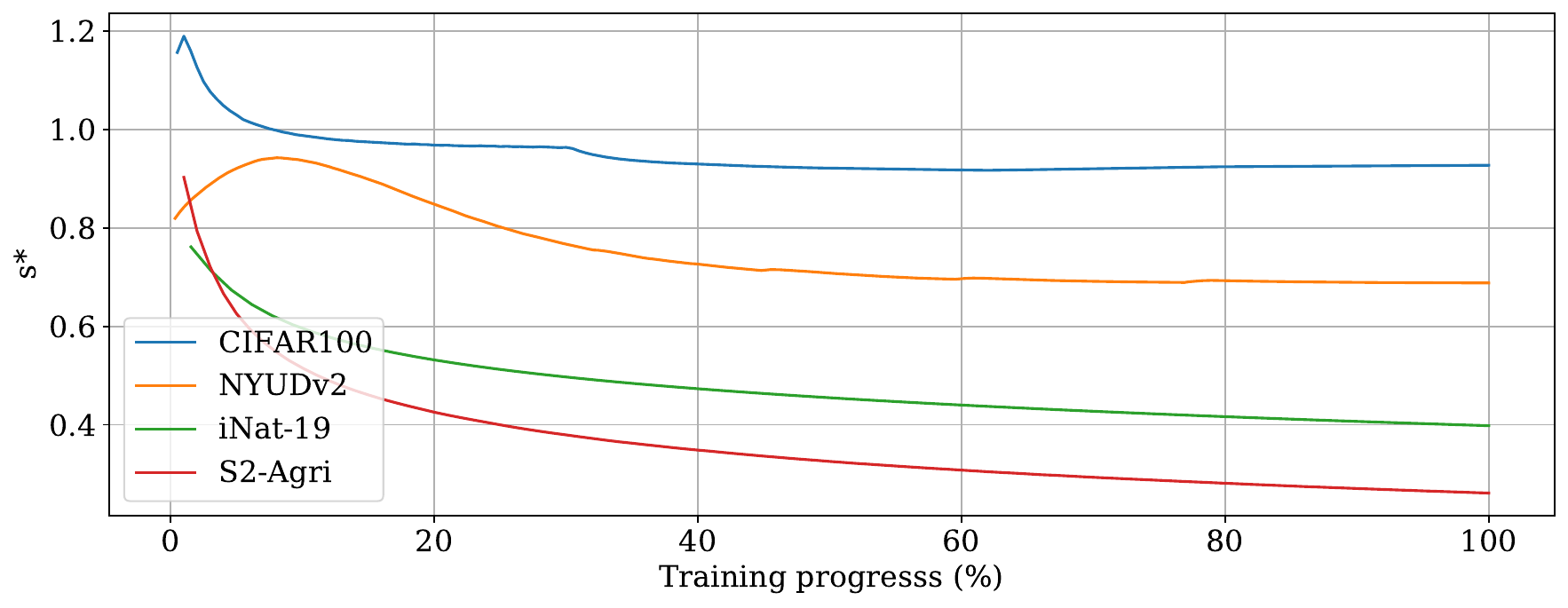}
    \caption{Evolution of the scaling factor $s^*$ in $\Ldisto$ along the training iterations of the four networks. We observe that $s^*$ consistently decreases to values smaller than $1$, which allow the prototypes to spread apart while respecting the fix distances defined by $D$.
    }
    \label{fig:scale}
\end{figure}
In \figref{fig:scale}, we represent the evolution of the scaling factor $s^*$ in $\Ldisto$ during training of our guided prototype method on the four datasets. Across all four models, $s^*$ presents a decreasing trend overall, which signifies that the average distance between prototypes increases. This is consistent with our analysis of prototypical networks: as the feature learning network and the prototypes are jointly learned, the samples' representations get closer to their true class' prototype. In doing so, they repel the other prototypes, which translate into an \emph{inflation} of the global scale of the problem. Our optimal scaling allows the prototypes' scale to expand accordingly. Without adaptive scaling, the data loss \eqref{eq:ldata_long} and regularizer \eqref{eq:loss:distosmooth} would conflict.

In all our experiments, this scale remained bounded and did not diverge. This can be explained by the fact that for each misclassification $k \rightarrow l$ of a sample $x_n$, the representation $f(x_n)$ is by definition closer to the erroneous prototype $\pi_l$ than of the true prototype $\pi_k$. The first term of $\mathcal{L}_\text{data}$ pushes the true prototype $\pi_k$ towards $f(x_n)$, and by transitivity---towards the erroneous prototype $\pi_l$. This phenomenon prevents prototypes from being pushed away from one another indefinitely.
However, if the prediction is too precise, \ie most samples are correctly classified, the prototypes may diverge. This setting, which we haven't yet encountered, may necessitate a regularization such as weight decay on the prototypes parameters.

Lastly, we remark that the asymptotic optimal scalings are different from one dataset to another. This can be explained foremost by differences in the depth and density of the class hierarchy of each dataset, as presented in \tabref{tab:nomenc}. As explained above, the inherent difficulty of the classification tasks also have an influence on the problem's scale. However, our parameter-free method is able to automatically find an optimal scaling.

\subsection{Inference}
As with other prototypical networks, we associate to a sample $n$ the class $k$ whose prototype $\pi_k$ is the closest to the representation $f(x_n)$ with respect to $d$, corresponding to the class of highest probability.
This process can be made efficient for a large number of classes $K$ and a high embedding dimension $m$ with a KD-tree data structure, which offers a query complexity of $O(\log(K))$ instead of $O(K\cdot m)$ for an exhaustive comparison. Hence, our method does not induce longer inference time than the cross-entropy for example, as the embedding function typically takes up the most time.

\subsection{Rank-based Guiding}
Following the ideas of  \citet{mettes2019hyperspherical}, we also experiment with a RankNet-inspired loss \citep{ranknet} which encourages the distances between prototypes to follow \emph{the same order} as the costs between their respective classes, without imposing a specific scaling: 
\begin{align}\label{eq:loss:rank}
    \Lrank(\pi) &= - \frac{1}{\vert\cT\vert }
\sum_{k,l,m \in \cT} 
\bar{\bf R}_{k,l,m} \cdot \text{log}(R_{k,l,m}) + (1 - \bar{\bf R}_{k,l,m}) \cdot \text{log}(1- R_{k,l,m}) 
~,
\end{align}
with $\cT=\{(k,l,m) \in  \cK^3\mid k\neq l, l\neq m, k\neq m\}$ the set of ordered triplet of $\cK$,  $\bar{\bf R}_{k,l,m}$ the hard ranking of the costs between $D_{k,l}$ and $D_{k,m}$, equal to $1$ if $D_{k,l} > D_{k,m} $ and $0$ otherwise, and $R_{k,l,m} = \text{sigmoid}(d(\pi_k,\pi_l) - d(\pi_k,\pi_m))$ the soft ranking between $d(\pi_k,\pi_l)$ and $d(\pi_k,\pi_m)$.
For efficiency reasons, we sample at each iteration only a $S$-sized subset of $\cT$. We use $S=10$ in our experiments.

\FloatBarrier

\section{Additional experimental details}
We give additional details on our experiments and some supplementary results in the following subsections. 

\subsection{Competing methods}
\paragraph{Hierarchical Cross-Entropy} (HXE)
\citet{bertinetto2020making} model the class structure with a hierarchical loss composed of the sum of the cross-entropies at each level of the class hierarchy. As suggested, a parameter $\alpha$ taken as $0.1$ defines exponentially decaying weights for higher levels.
%
\paragraph{Soft Labels} (Soft-labels)
\citet{bertinetto2020making} propose as second baseline in which the the one-hot target vectors are replaced by soft target vectors in the cross-entropy loss. These target vectors are defined as the softmin of the costs between all labels and the true label, with a temperature ${1}/{\beta}$ chosen as $0.1$, as recommended in \citet{bertinetto2020making}.
%
\paragraph{Earth Mover Distance regularization}(XE+EMD):
\citet{hou2016squared} propose to account for the relationships between classes with a regularization based on the squared earth mover distance. We use $D$ as the ground distance matrix between the probabilistic prediction $p$ and the true class $y$. This regularizer is added along the cross-entropy with a weight of $0.5$ and an offset $\mu$ of $3$.
\paragraph{Hierarchical Inference} (YOLO):
\citet{redmon2017yolo9000} propose to model the hierarchical structure between classes into a tree-shaped graphical model. First, the conditional probability that a sample belongs to a  class given its parent class is obtained with a softmax restricted to the class' co-hyponyms (\ie siblings). Then, the posterior probability of a leaf class is given by the product of the conditional probability of its ancestors. The loss is defined as the cross-entropy of the resulting probability of the leaf-classes. 
%
\paragraph{Hyperspherical Prototypes} (Hyperspherical-proto):
The method proposed by \citet{mettes2019hyperspherical} is closer to ours, as it relies on embedding class prototypes. They advocate to first position prototypes on the hypersphere using a rank-based loss (see \secref{sec:ablation}) 
combined with a prototype-separating term. 
They then use the squared cosine distance between the image embeddings and prototypes to train the embedding network. 
Note that in our re-implementation, we used the finite metric defined by $D$ instead of Word2Vec \citep{mikolov2013word2vec} embeddings to position prototypes. Lastly, we do not evaluate  on S2-Agri as the integration of the focal loss is non-trivial. 
%
\paragraph{Deep Mean Classifiers} (Deep-NCM):
\citet{guerriero2018deepNCM} present another prototype-based approach. Here, the prototypes are the cumulative mean of the embeddings of the classes' samples, updated at each iteration. The embedding network is supervised with $\Ldata$ with $d$ defined as the squared Euclidean norm.
%
\subsection{Numerical results}
\label{sec:numres}
The numerical values of the results shown in \figref{fig:results} are given in \tabref{tab:results}.

-------------------------------------------------------------
\begin{figure}[h!]
\centering
\begin{tabular}{cc}
	\subfigure[Porcupine $\leftrightarrow$ Shrew  $-80\%$, $D_a=4$ ]{
	\captionsetup{justification=centering}
		\centering
		\includegraphics[width=2.5cm, trim=2.5cm 2.5cm 2.5cm 2.5cm, clip]{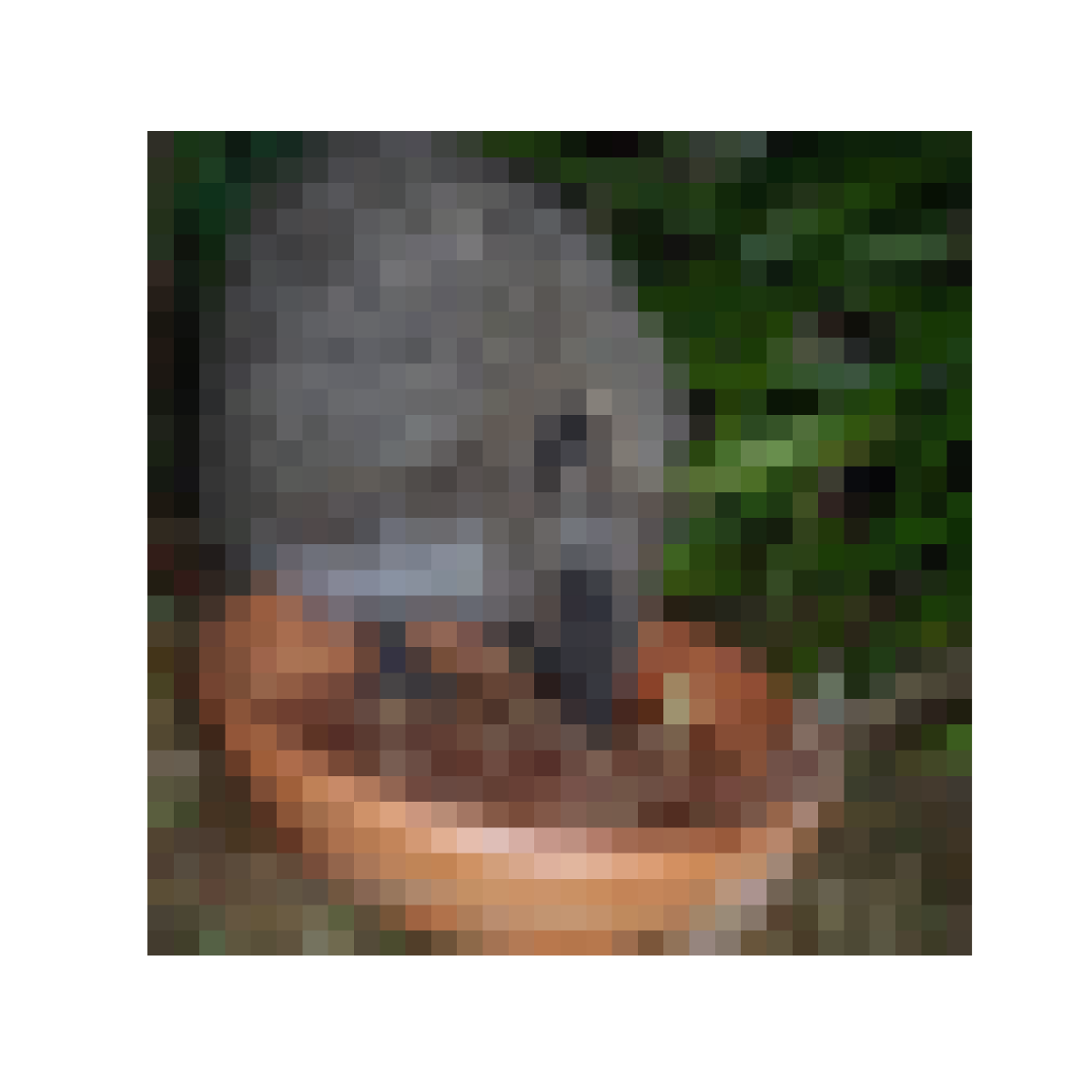}
		\includegraphics[width=2.5cm, trim=2.5cm 2.5cm 2.5cm 2.5cm, clip]{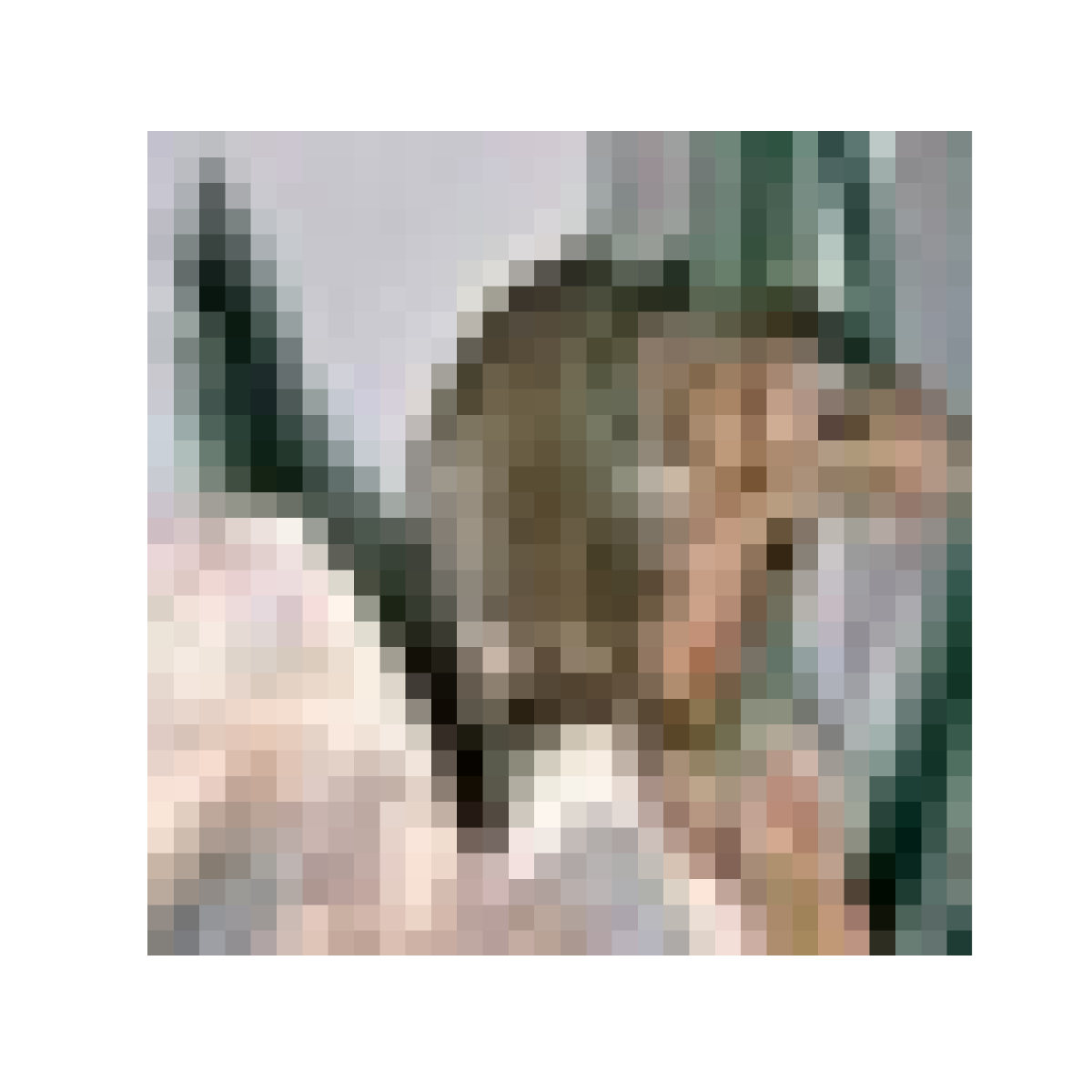}
	}
	&
	\subfigure[Caterpillar  $\leftrightarrow$  Lizard  $-64\%$, $D_b=4$]{
	\captionsetup{justification=centering}
		\centering
		\includegraphics[width=2.5cm, trim=2.5cm 2.5cm 2.5cm 2.5cm, clip]{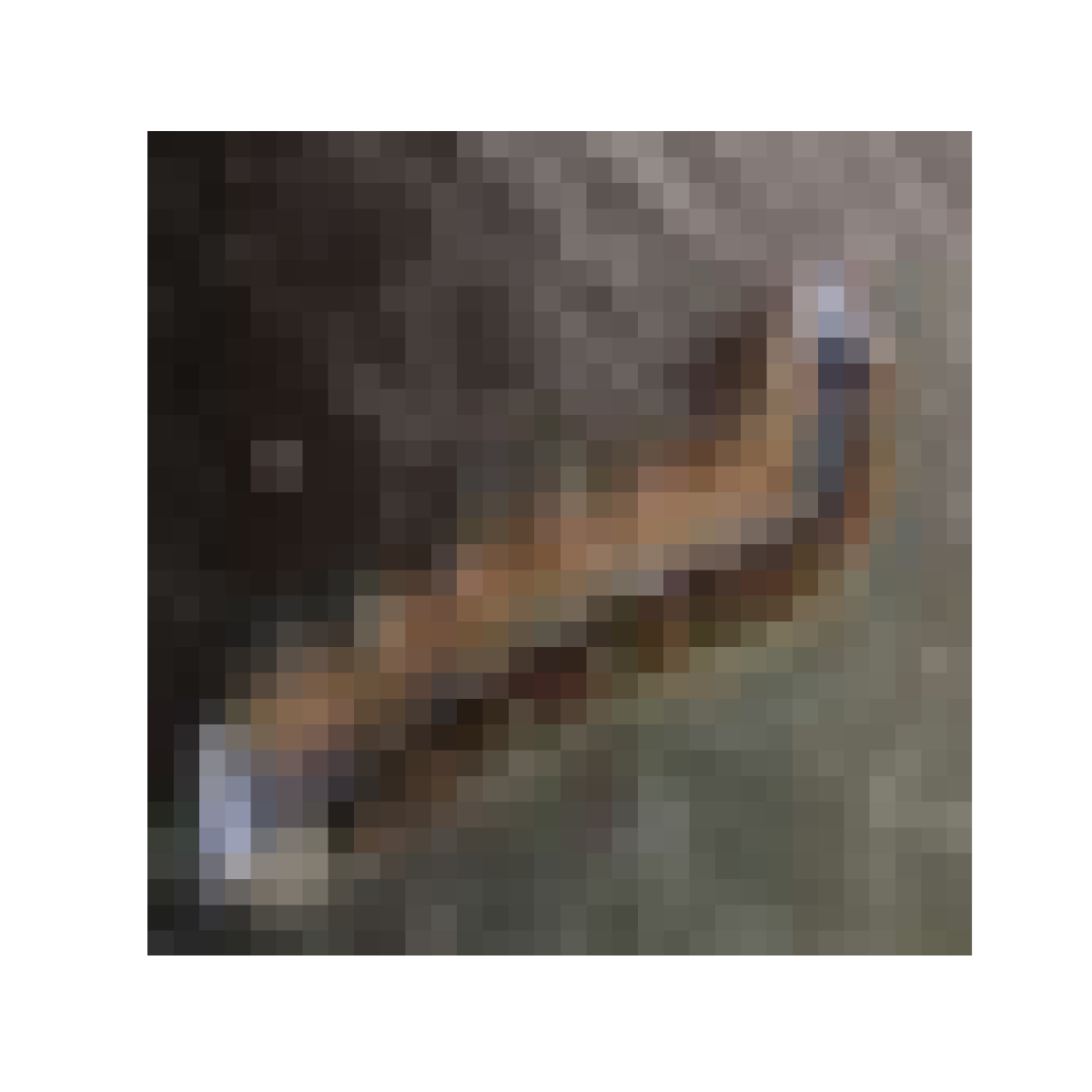}
		\includegraphics[width=2.5cm, trim=2.5cm 2.5cm 2.5cm 2.5cm, clip]{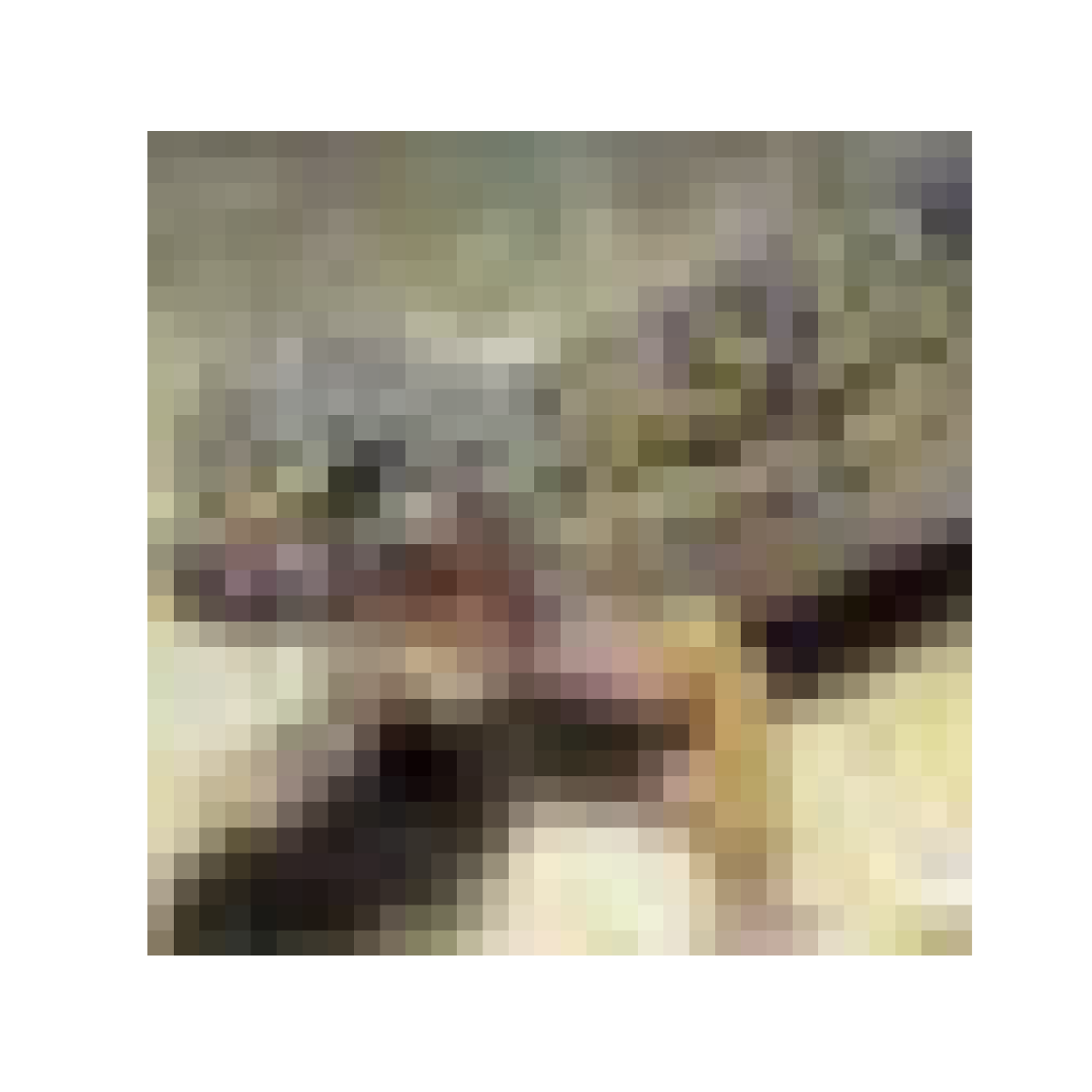}
	}
	\\
	\subfigure[Plate  $\leftrightarrow$  Clock  $-40\%$, $D_c=4$ ]{
	\captionsetup{justification=centering}
		\centering
		\includegraphics[width=2.5cm, trim=2.5cm 2.5cm 2.5cm 2.5cm, clip]{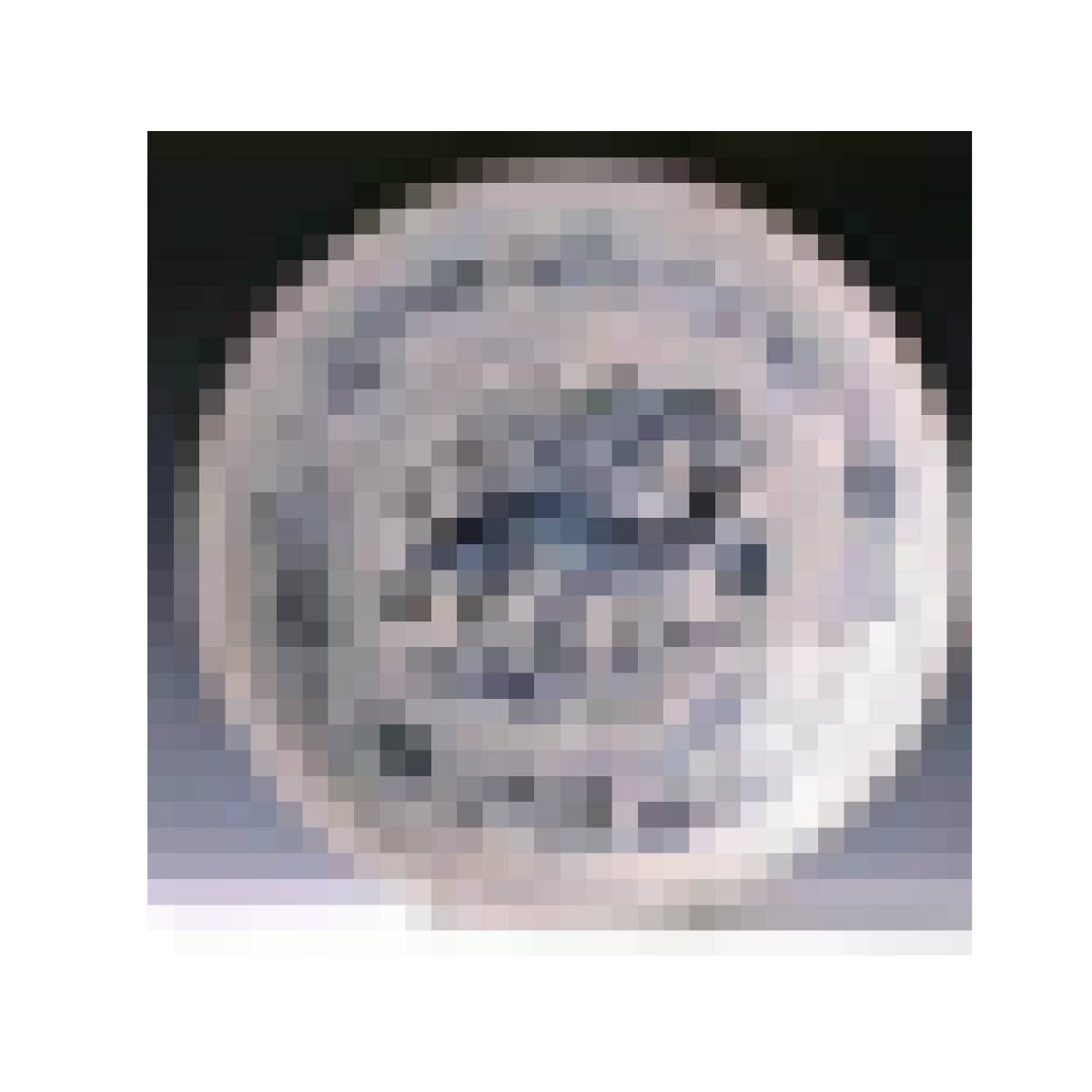}
		\includegraphics[width=2.5cm, trim=2.5cm 2.5cm 2.5cm 2.5cm, clip]{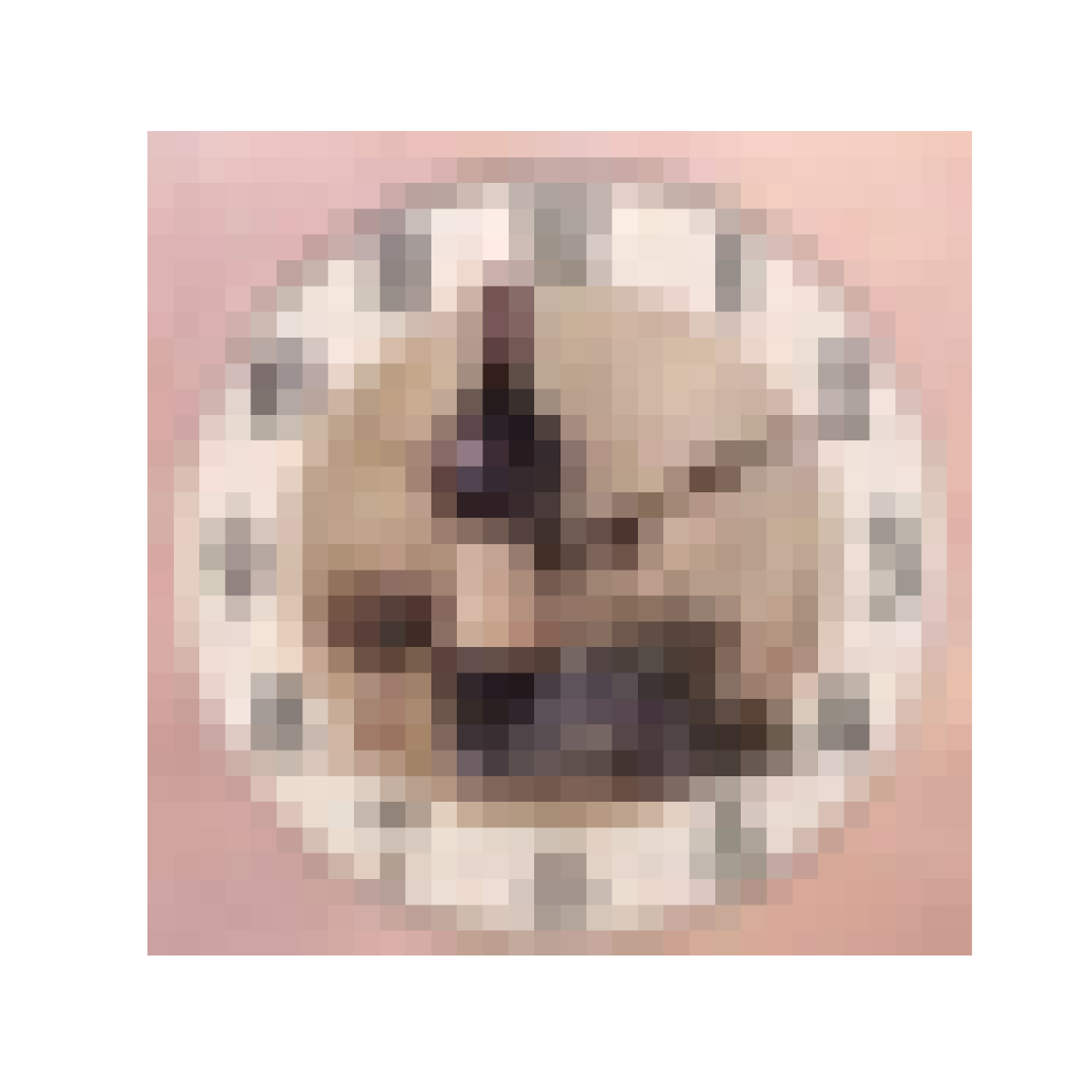}
	}
	&
	\subfigure[Streetcar  $\leftrightarrow$  Bus  $+58\%$, $D_d=4$]{
	\captionsetup{justification=centering}
		\centering
		\includegraphics[width=2.5cm, trim=2.5cm 2.5cm 2.5cm 2.5cm, clip]{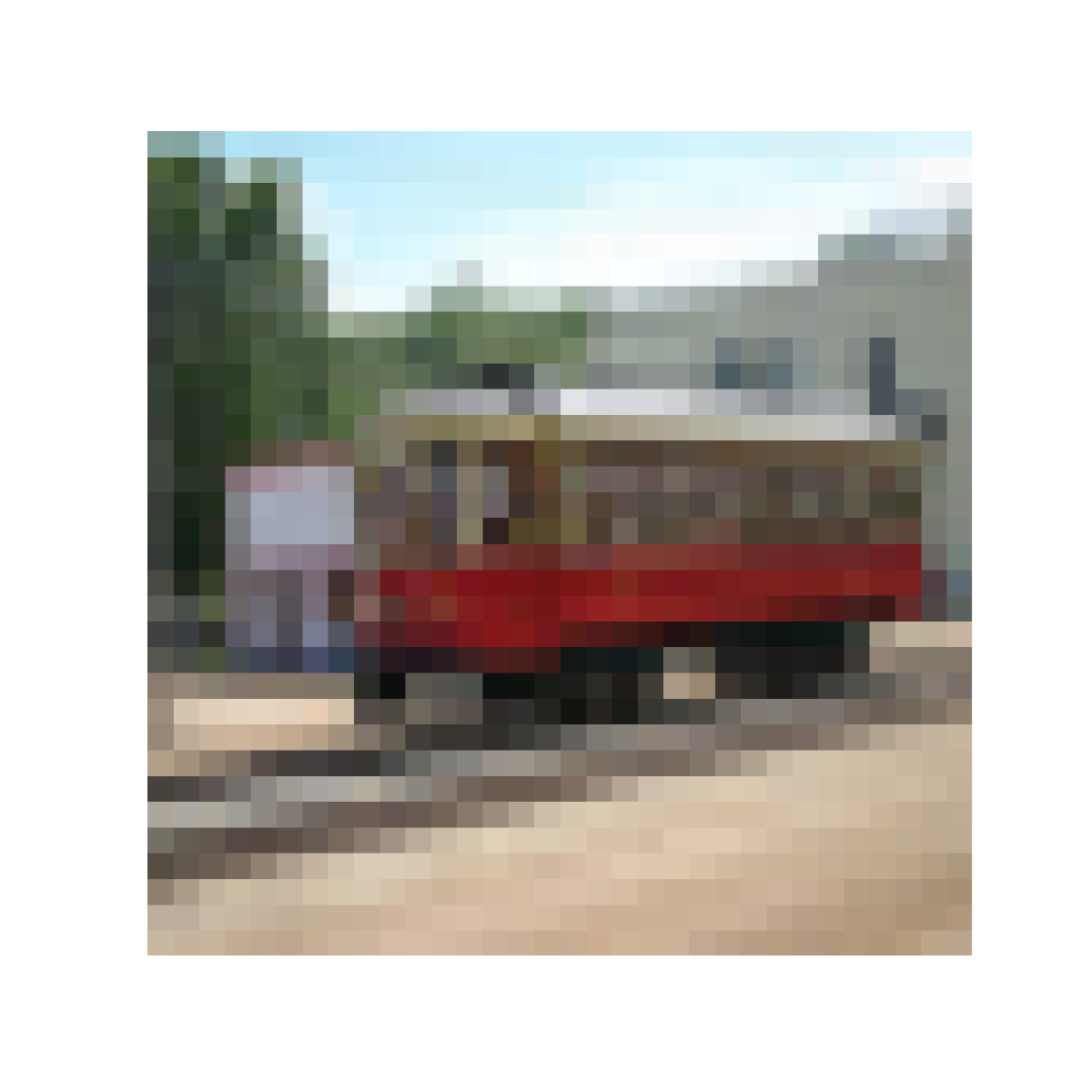}
		\includegraphics[width=2.5cm, trim=2.5cm 2.5cm 2.5cm 2.5cm, clip]{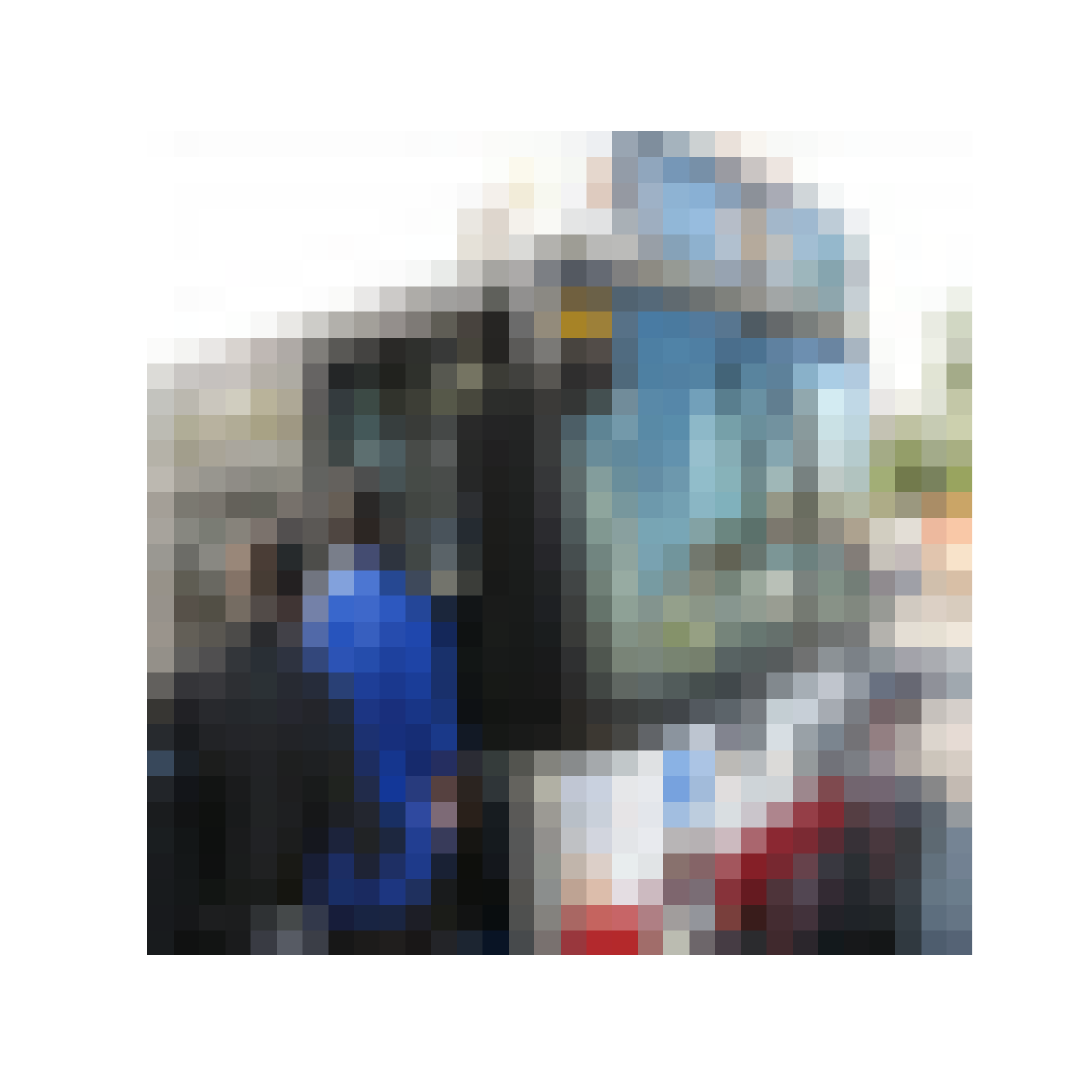}
	}\\
	\subfigure[Otter  $\leftrightarrow$  Seal  $+60\%$, $D_e=2$]{
	\captionsetup{justification=centering}
		\centering
		\includegraphics[width=2.5cm, trim=2.5cm 2.5cm 2.5cm 2.5cm, clip]{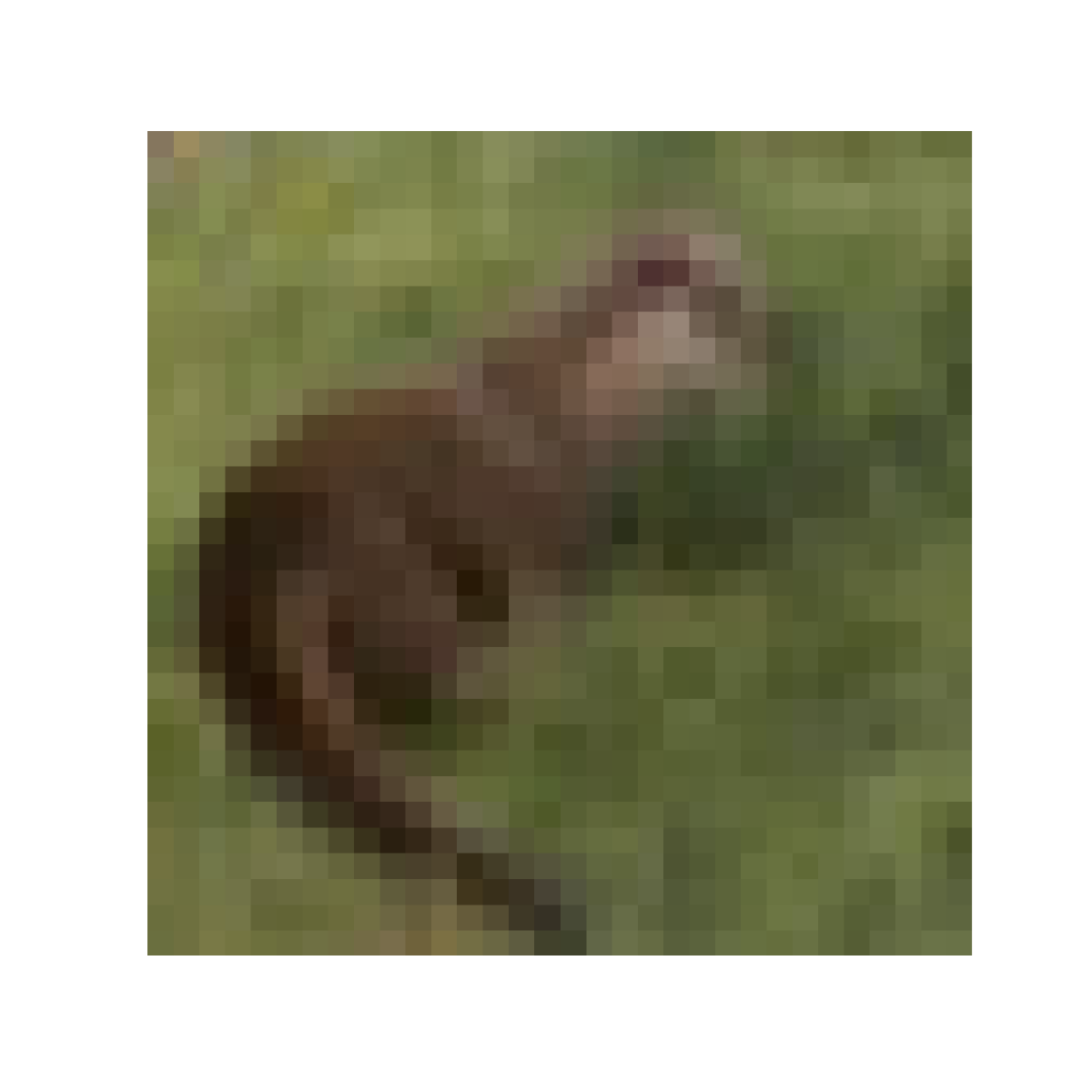}
		\includegraphics[width=2.5cm, trim=2.5cm 2.5cm 2.5cm 2.5cm, clip]{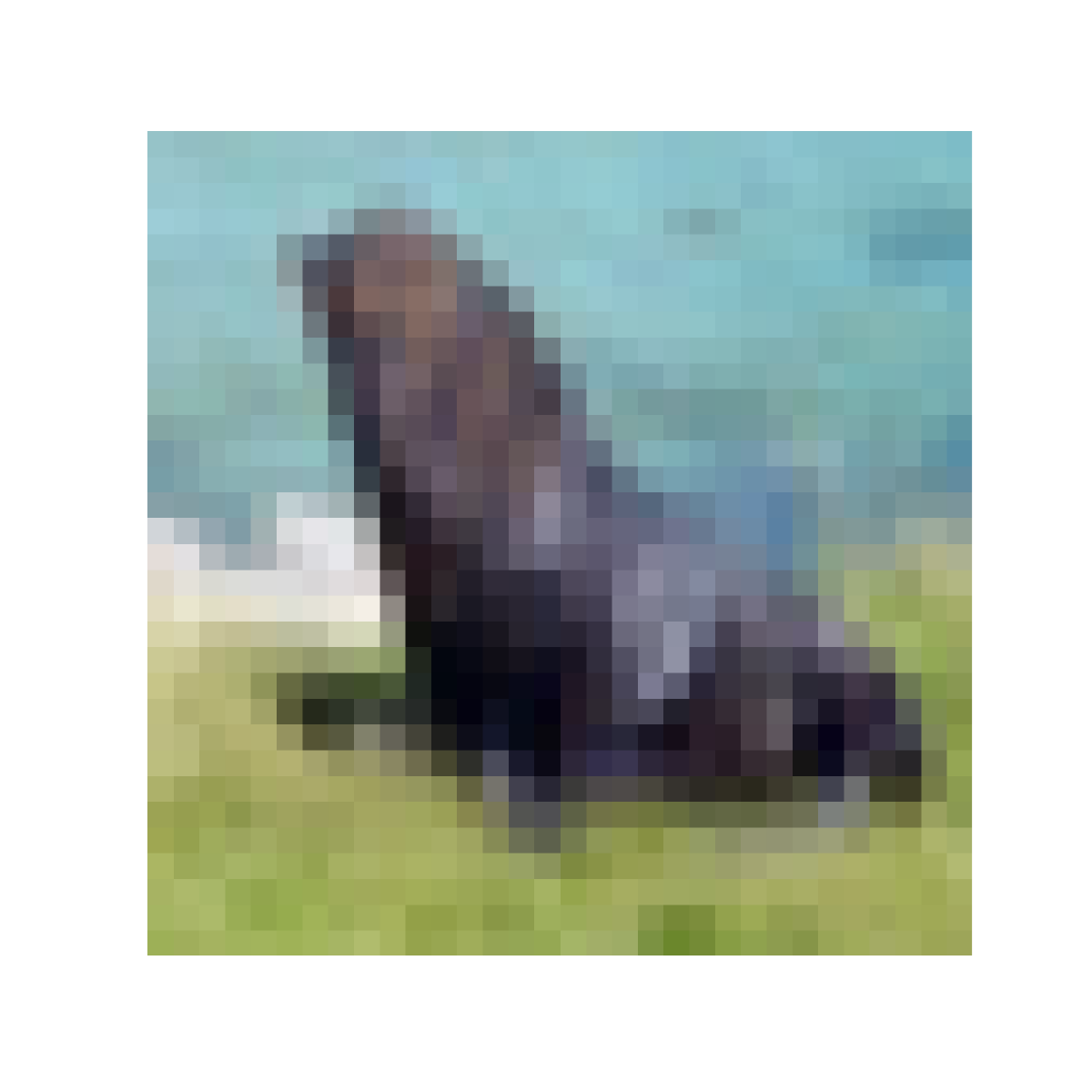}
	}&
	\subfigure[Boy  $\leftrightarrow$  Man  $+78\%$, $D_f=2$ ]{
	\captionsetup{justification=centering}
		\centering
		\includegraphics[width=2.5cm, trim=2.5cm 2.5cm 2.5cm 2.5cm, clip]{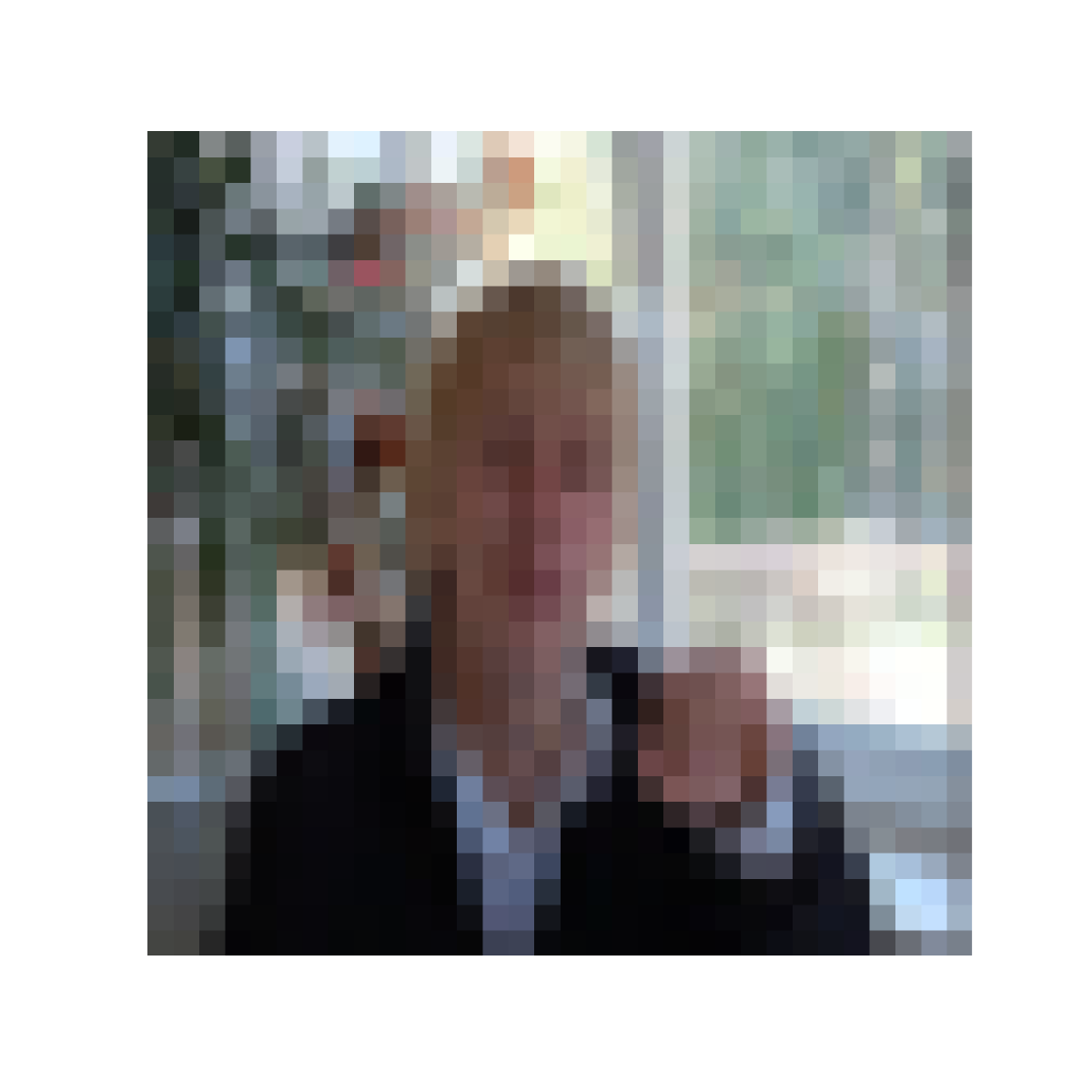}
		\includegraphics[width=2.5cm, trim=2.5cm 2.5cm 2.5cm 2.5cm, clip]{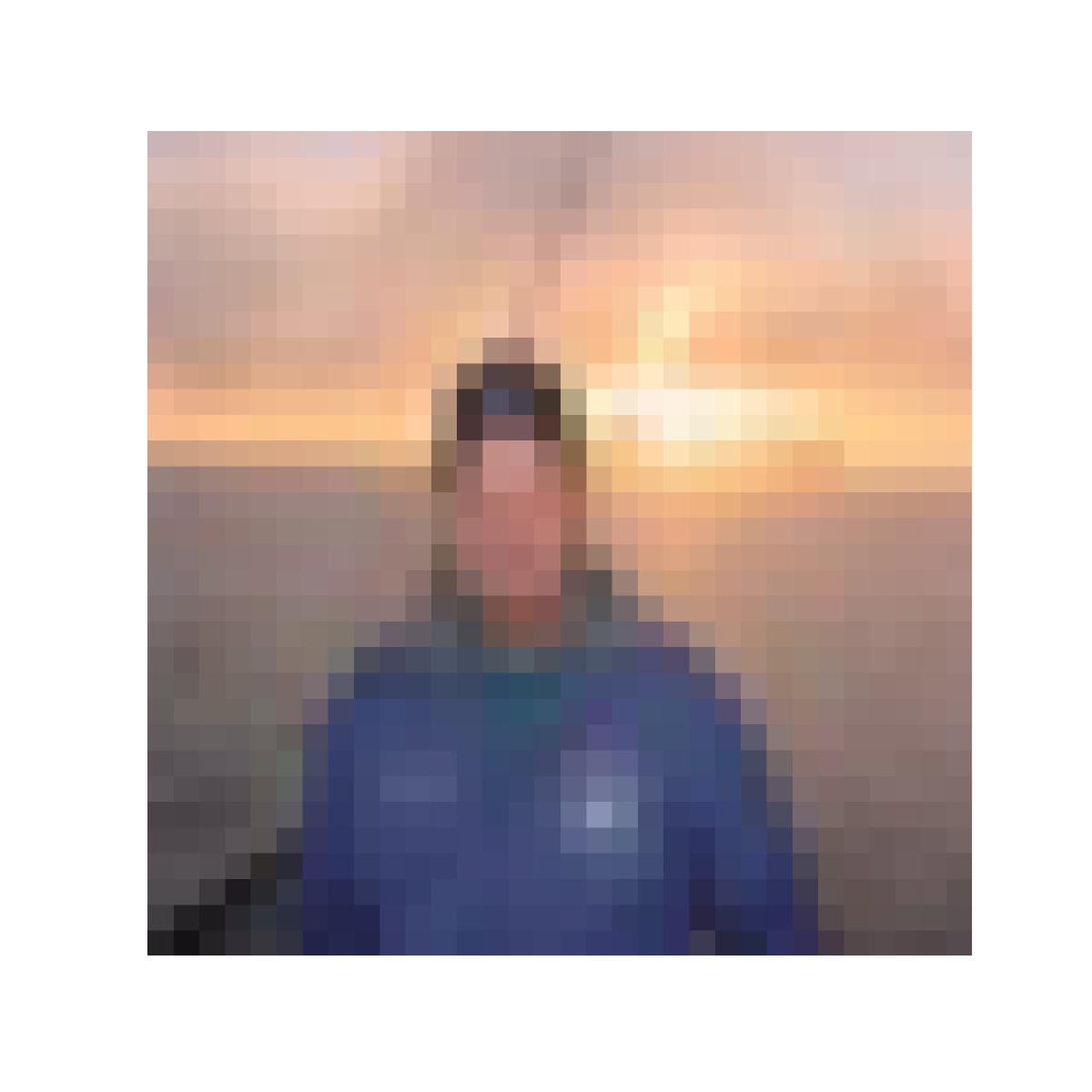}
	}
	\end{tabular}
	\caption{Best (a-c) and worse (d-f) improvements in terms of class  confusion provided by  Guided-proto compared to the cross-entropy baseline for CIFAR100, given in $\%$, along with their error cost. The metric guided regularization particularly helps decreasing the confusions between classes that are visually similar (\eg Plate and Clock) but are not direct siblings in the class hierarchy ($D=4$).
	Conversely, the regularization hinders performance for visually similar siblings classes (\eg Otter and Seal, $D=2$). } 
	\label{fig:pairs}
\end{figure}

\begin{table*}[h]
\caption[]{Error Rate (ER) in \% and Average Hierarchical Cost (AHC) on three datasets for our proposed methods (top) and the competing approaches (bottom). The values are computed with the median over $5$ runs for CIFAR100, the average over $5$ cross-validation folds for S2-Agri, and a single run for NYUDv2 and iNat-19. (HSP: Hyperspherical Prototypes, GP: Guided Prototypes).
}
\vspace{.2cm}
\centering
\ra{1.3}
\small{
\begin{tabular}{@{}lccccccccccc@{}}
& \multicolumn{2}{c}{CIFAR100} & \phantom{a}& \multicolumn{2}{c}{NYUDv2} & \phantom{a} & \multicolumn{2}{c}{S2-Agri}& \phantom{a} & \multicolumn{2}{c}{iNat-19}\\\cmidrule{2-3} \cmidrule{5-6} \cmidrule{8-9} \cmidrule{11-12}&       ER & AHC          & & ER         & AHC     &    & ER & AHC      &&ER &AHC  \\\toprule

Cross-Entropy 
&     24.2         &     1.160          &&        32.7   &      1.486    & &   19.4        &     0.699 && 40.9 & 1.993       \\\midrule
HXE  &     24.1  &     1.168      &&     32.4    &  1.456  & &   19.5  &      0.731 && 41.8 & 2.013            \\
Soft-label  &     23.5          &      \bf1.046        &&     32.4    &  \textbf{1.424}  & &   19.2  & 0.703 && 52.8 & 2.029                  \\
XE+EMD  &     24.5        &      1.196         &&  33.3      &    1.498 & &    19.0 &   0.687 && 40.1 & 1.893                \\
YOLO  
&   26.2   &       1.214        &&     \textbf{ 32.0}      &     \bf1.425       & &       19.1     &  \bf 0.685 && 42.0 & 1.942 \\
HSP 
&      29.4       &      1.472         &&     49.7        &     2.329       & &  - &     - && 42.4 & 2.027  \\
Deep-NCM   &     25.6        &      1.249         &&        33.5   &      1.498       &&    19.4   &    0.702 && 40.8 & 1.929  \\\midrule
Free-proto   &     23.8        &        1.091       &&       32.5      &       1.462    &&        19.1    &      0.691 && \bf 38.8 & \bf 1.728      \\ 
Fixed-proto    &       24.7      &      1.083      & &        33.1   &     1.462       &&      19.4   &  0.710 && 43.9 & 2.148   \\
GP-rank  &   \bf 23.3           &         \bf 1.056      & &        32.7         &     1.445            & &    19.1      &     0.691 && 39.3 & \bf1.718      \\
GP-disto     &     23.6         &        \bf 1.052      & &       32.5     &     1.440        &&     \bf 18.9      &    \bf  0.685 && \bf 38.9 & \bf1.721      \\
\bottomrule
\end{tabular}
}
\label{tab:results}
\end{table*}
\subsection{Ablation Studies}

%

\begin{table*}[t]
\caption{Influence of the choice of scaling in $\Ldisto$,  metric guiding regularizer, guiding scheme, and distance function $d$ on the performance of Guided-proto on the four datasets. For $d$, we compare the performance of the Euclidean norm, the pseudo-Huberized Euclidean norm, and the square Euclidean norm.}
\label{tab:ablation}
\vspace{.2cm}
\centering
\ra{1.3}
\footnotesize{
\begin{tabular}{@{}lccccccccccc@{}}
& \multicolumn{2}{c}{CIFAR100} & \phantom{}& \multicolumn{2}{c}{NYUDv2} & \phantom{} & \multicolumn{2}{c}{S2-Agri}& \phantom{} & \multicolumn{2}{c}{iNat-19}\\\cmidrule{2-3} \cmidrule{5-6} \cmidrule{8-9}\cmidrule{11-12}&       ER & AHC          & & ER         & AHC     &    & ER & AHC&    & ER & AHC        \\\toprule
\textbf{Guided-proto }  & 23.6  & \textbf{1.052 }    & & 32.5  & 1.440       && \textbf{18.9} & \textbf{0.685}     && \textbf{38.9} & 1.721  \\\cmidrule{2-3} \cmidrule{5-6} \cmidrule{8-9}\cmidrule{11-12}
Fixed-scale                                   & +0.1   & +0.003    & & 0.0        & 0.000       && +0.2 & \textbf{+0.001}     && +0.9 & 0.000  \\
Fixed-proto                              & +1.1   & +0.031    & & +0.6   & +0.013  && +0.5 & +0.025    && +5.0 & +0.427 \\
Rank-based guiding                                & \textbf{-0.3}    & +0.004    & & +0.2   & +0.005  && +0.2 & +0.006    && +0.4 & \textbf{-0.003}  \\

Squared Norm                 &     +1.0     &     +0.118       & &    0.0     & +0.005  && +0.6  & +0.022        && +2.2 & +0.233 \\ \bottomrule
\end{tabular}
}

\end{table*}

\paragraph{Choice of distance :} In \tabref{tab:ablation}, we report the performance of the Guided-proto model on the four datasets when replacing the Euclidean norm  with the squared Euclidean norm. Across our experiments, the squared-norm based model yields a worse performance. This is a notable result as it is the distance commonly used in most prototypical networks \citep{snell2017prototypical, guerriero2018deepNCM}. 
\paragraph{Rank-based Regularization:} \citet{mettes2019hyperspherical} use a rank-based loss \citep{ranknet} to encourage prototype mappings whose pairwise distance follows the same order as an external qualification of errors $D$.
We argue that our formulation of $\Ldisto$ provides a stronger supervision than only considering the order of distances, and allows the prototypes to find a more profitable arrangement in the embedding space. In \tabref{tab:ablation}, we observe that replacing our distortion-based loss by a rank-based one results in a slight decrease of overall performance.

\begin{table}[t!]
    \caption{
    Robustness assessment of guided prototypes on CIFAR100 (left) and S2-Agri (right). The top line is our chosen hyper-parameter configuration. }
    \vspace{.2cm}
    \label{tab:ablation:hyper}
    \centering
    \begin{tabular}{@{}lccccc@{} }
& \multicolumn{2}{c}{CIFAR100} & \phantom{abc}&  \multicolumn{2}{c}{S2-Agri}\\\cmidrule{2-3} \cmidrule{5-6}
& ER & AHC && ER & AHC \\ \toprule
    Guided-proto &  \multirow{2}{*}{23.6}  &   \multirow{2}{*}{\bf1.052}   &&\multirow{2}{*}{\bf18.9}  & \multirow{2}{*}{\bf0.685} \\
    $\lambda=1$, hidden proto,  &    &  & &    & \\\midrule
    $\lambda = 0.5$             & \bf -0.2  & +0.015     & &       +0.5  & +0.019\\
    $\lambda = 2$               &    +0.3     & +0.013     & &     +0.2 & +0.010\\
    $\lambda = 3$               &    +0.1    &   +0.004     & &    +0.1    & +0.010\\
    leaf proto only             &   +0.2    &  +0.015                  & &       +0.3 & +0.011  \\ \bottomrule
    \end{tabular}
\end{table}
\paragraph{Robustness:}
As shown in \tabref{tab:ablation:hyper}, our presented method has  low sensitivity with respect to regularization strength: models trained with $\lambda$ ranging from $0.5$ to $3$ yield sensibly equivalent performances. Choosing $\lambda = 1$ seems to be the best configuration in terms of AHC.
%
%
%
\paragraph{Hidden prototypes:} In cases where the cost matrix $D$ is derived from a tree-shaped class hierarchy, it is possible to also learn prototypes for the internal nodes of this tree, corresponding to super-classes of leaf-level labels. These prototypes do not appear in $\Ldata$, but can be used in the prototype penalization to instill more structure into the embedding space. In \tabref{tab:ablation:hyper}, line \emph{leaf-proto}, we note a small but consistent improvement in terms of AHC resulting in associating prototypes for classes corresponding to the internal-nodes of the tree hierarchy as well.
\subsection{Illustration of Results}

In \figref{fig:pairs} and \figref{fig:confusion}, we illustrate that our model particularly improves the classification rates of classes with high visual similarity and comparatively large error costs.  

\section{Additional Implementation Details}
\paragraph{CIFAR100} ResNet-18 is trained on CIFAR100 using SGD with initial learning rate $l_r=10^{-1}$,  momentum set to $0.9$ and weight decay $w_d = 5\cdot 10^{-4}$. The network is trained for $200$ epochs in batches of size $128$, and the learning rate is divided by $5$ at epochs $60$, $120$, and $160$. The model is evaluated using its weights of the last epoch of training, and the results reported in the paper are median values over $5$ runs.
\paragraph{NYUDv2} We train FuseNet on NYUDv2 using SGD with momentum set to $0.9$. The learning rate is set initially to $10^{-3}$ and multiplied at each epoch by a factor that exponentially decreases from $1$ to $0.9$. The network is trained for $300$ epochs in batches of $4$ with weight decay set to $5 \cdot 10^{-3}$. We report the performance of the best-of-five last testing epochs.
\paragraph{S2-Agri} We train PSE+TAE on S2-Agri using Adam with $l_r=10^{-3}$, $\beta=(0.9 ; 0.999)$ and no weight decay. The dataset is randomly separated in five splits. For each of the five folds, 3 splits are used as training data on which the network is trained in batches of $128$ samples for $100$ epochs. The best epoch is selected based on its performance on the validation set, and we use the last split to measure the final performance of the model. We report the average performance over the five folds.
\paragraph{iNaturalist-19} Given the complexity of the dataset, we follow \cite{bertinetto2020making} and use a ResNet-18 pre-trained on ImageNet. The network is trained for $65$ epochs in batches of $64$ epochs using Adam with $l_r=10^{-4}$, $\beta=(0.9 ; 0.999)$ and no weight decay. The best epoch is selected based on the performance on the validation set, and we report the performance on the held-out test set.
\section{Hierarchies used in Experiments}
We present here the hierarchy used in the numerical experiments to derive the cost matrix. We define the cost between two classes as the length of the shortest path in the proposed tree-shape hierarchy. The hierarchy of CIFAR100 is presented in \figref{fig:hier:cifar}, NYUDv2 in \figref{fig:hier:nyu}, S2-Agri in \figref{fig:hier:agri}, and iNat-19 in \figref{fig:hier:inat}.

For S2-Agri, we built the hierarchy by combining the two levels available in the dataset S2 of Garnot \etal withwith the fine-grained description of the agricutltural parcel classes on the French Payment Agency's website (in French):

\url{https://www1.telepac.agriculture.gouv.fr/telepac/pdf/tas/2017/Dossier-PAC-2017_notice_cultures-precisions.pdf}.

Note that for S2-Agri, following \cite{sainte2019satellite} we have removed all classes that had less than $100$ samples among the almost $200\,000$ parcels to limit the imbalance of the dataset. 

\begin{figure}
    \centering
    \includegraphics[width=\linewidth, trim=5.5cm 6cm 5.5cm 6cm, clip]{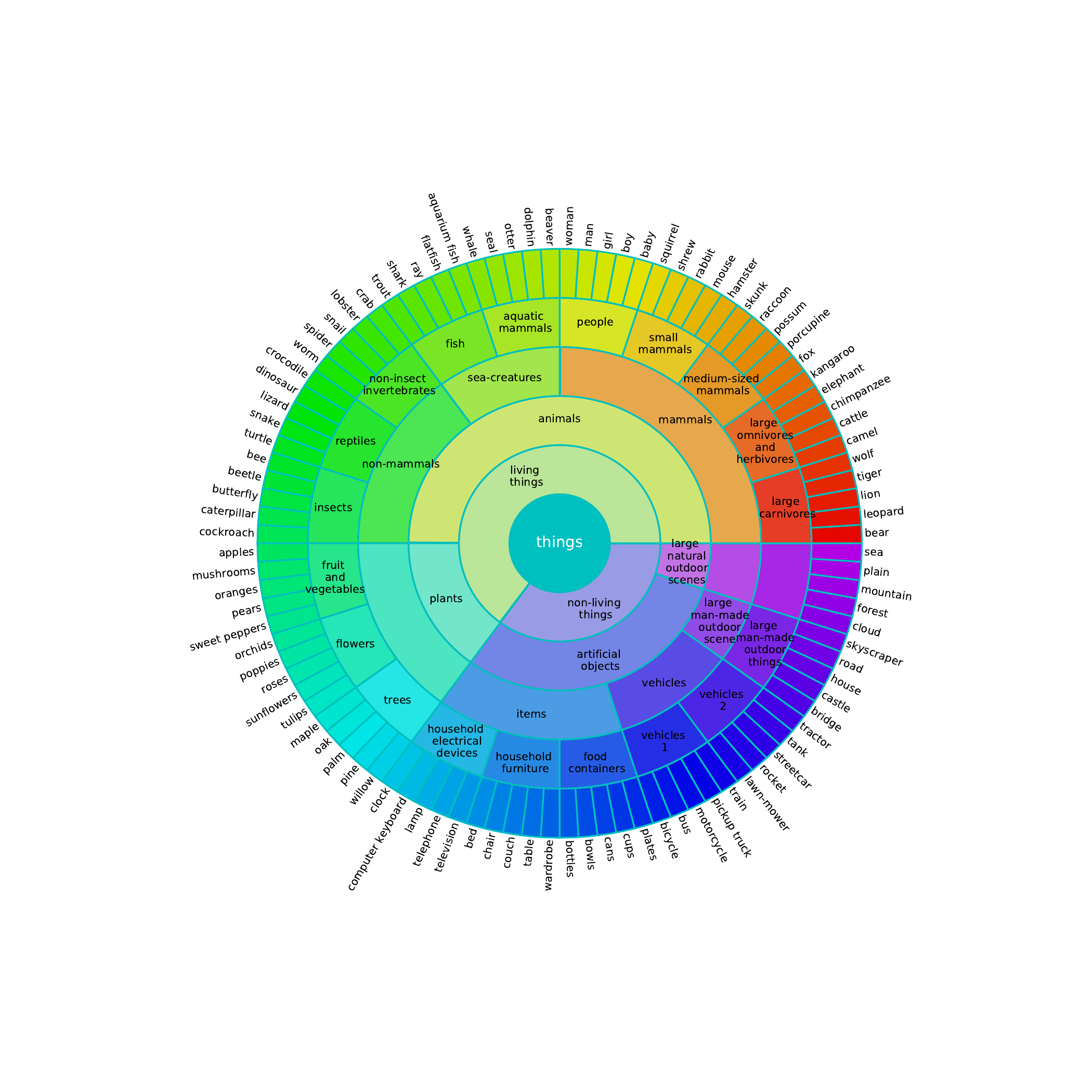}
    \caption{Class hierarchy for CIFAR100. The arcs at different radii represent the different classes of each level of the hierarchy. Unlabelled arcs share the same name as their parent class.}
    \label{fig:hier:cifar}
\end{figure}
\begin{figure}
    \centering
    \includegraphics[width=\linewidth, trim=5cm 6cm 5cm 6cm, clip]{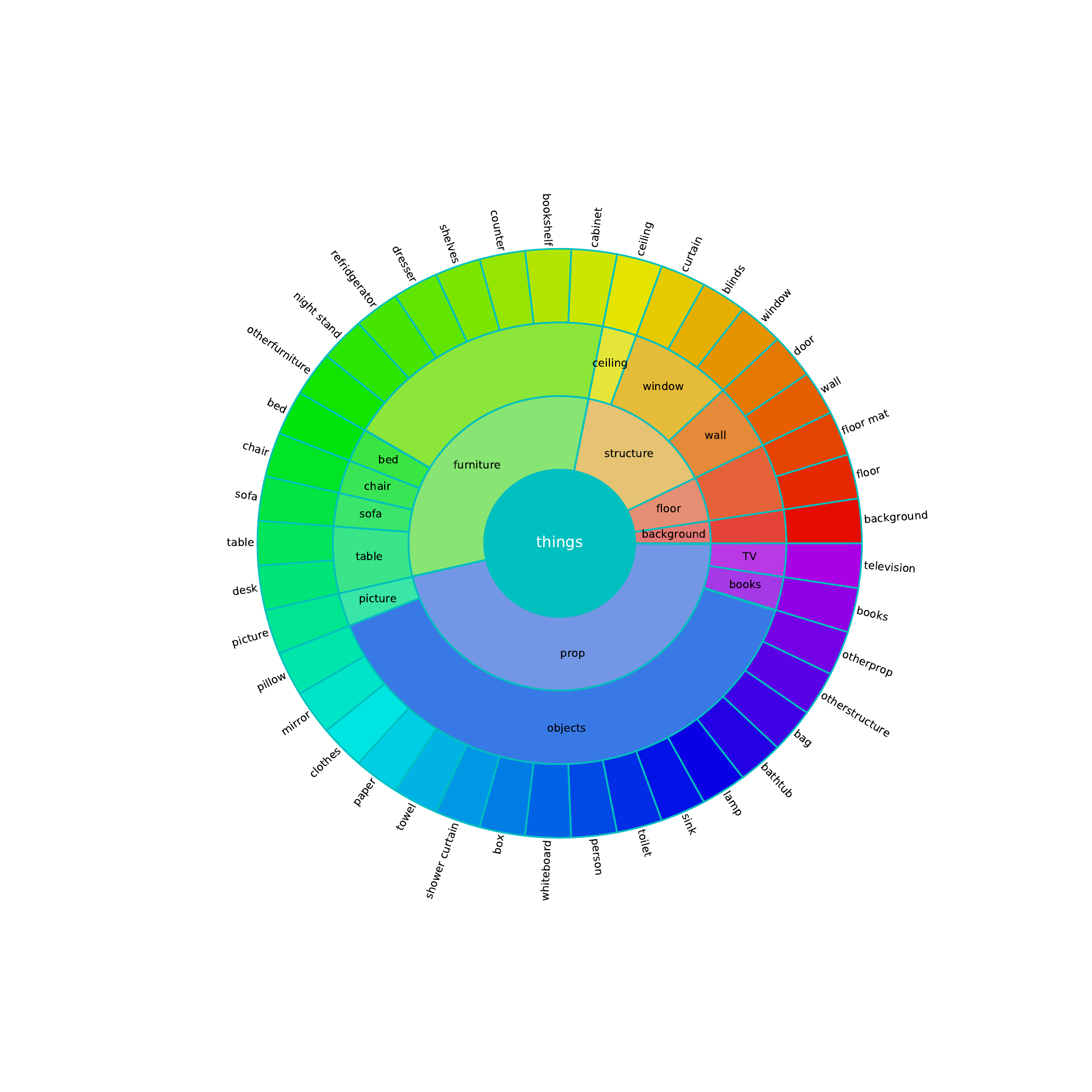}
    \caption{Class hierarchy for NYUv2}
    \label{fig:hier:nyu}
\end{figure}

\begin{figure}
    \centering
        \includegraphics[width=\linewidth, trim=4.5cm 4.5cm 3.8cm 3.8cm, clip]{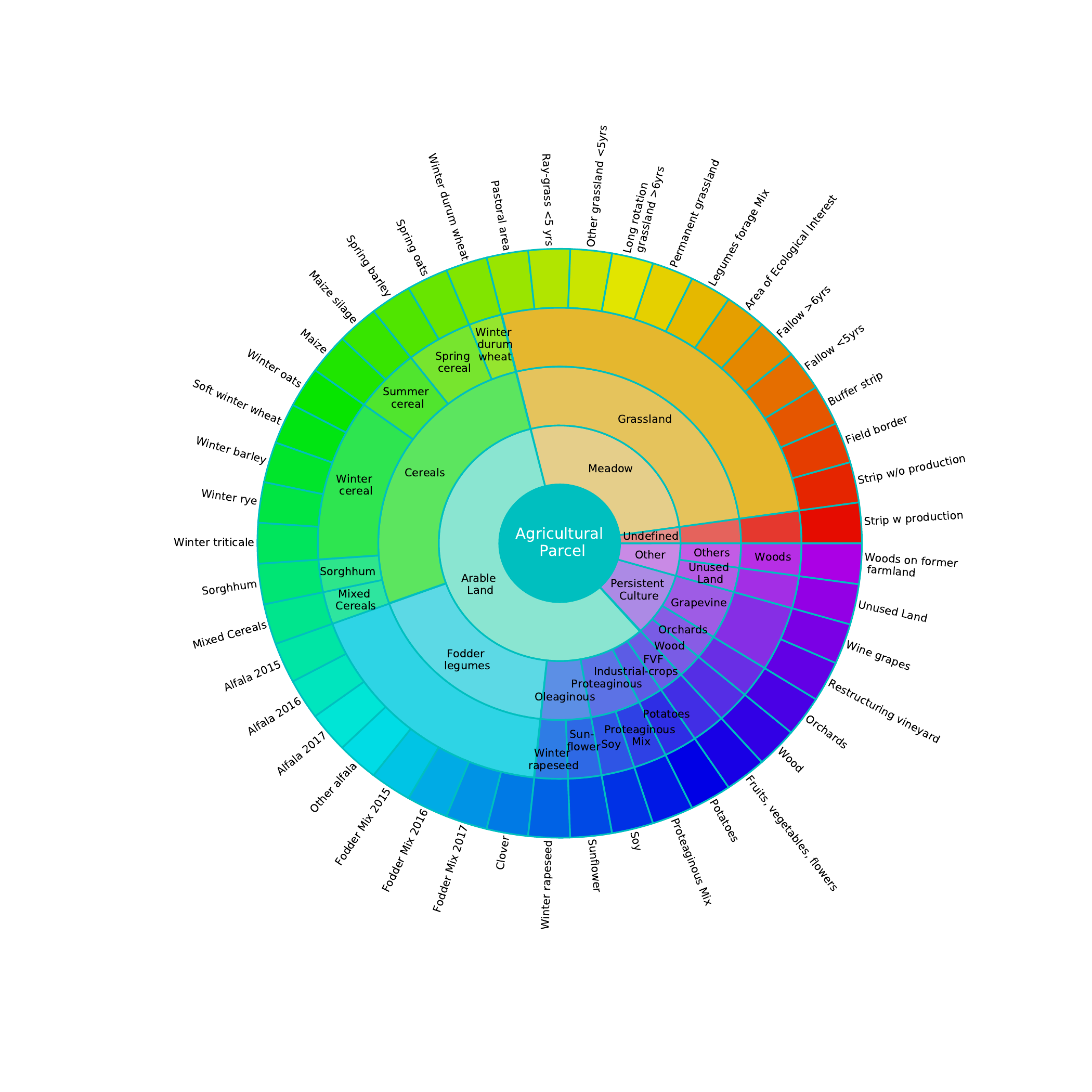}
    \caption{Class hierarchy for S2-Agri}
    \label{fig:hier:agri}
\end{figure}
\begin{figure}
    \centering
        \includegraphics[width=\linewidth, trim=6cm 6cm 6cm 6cm, clip]{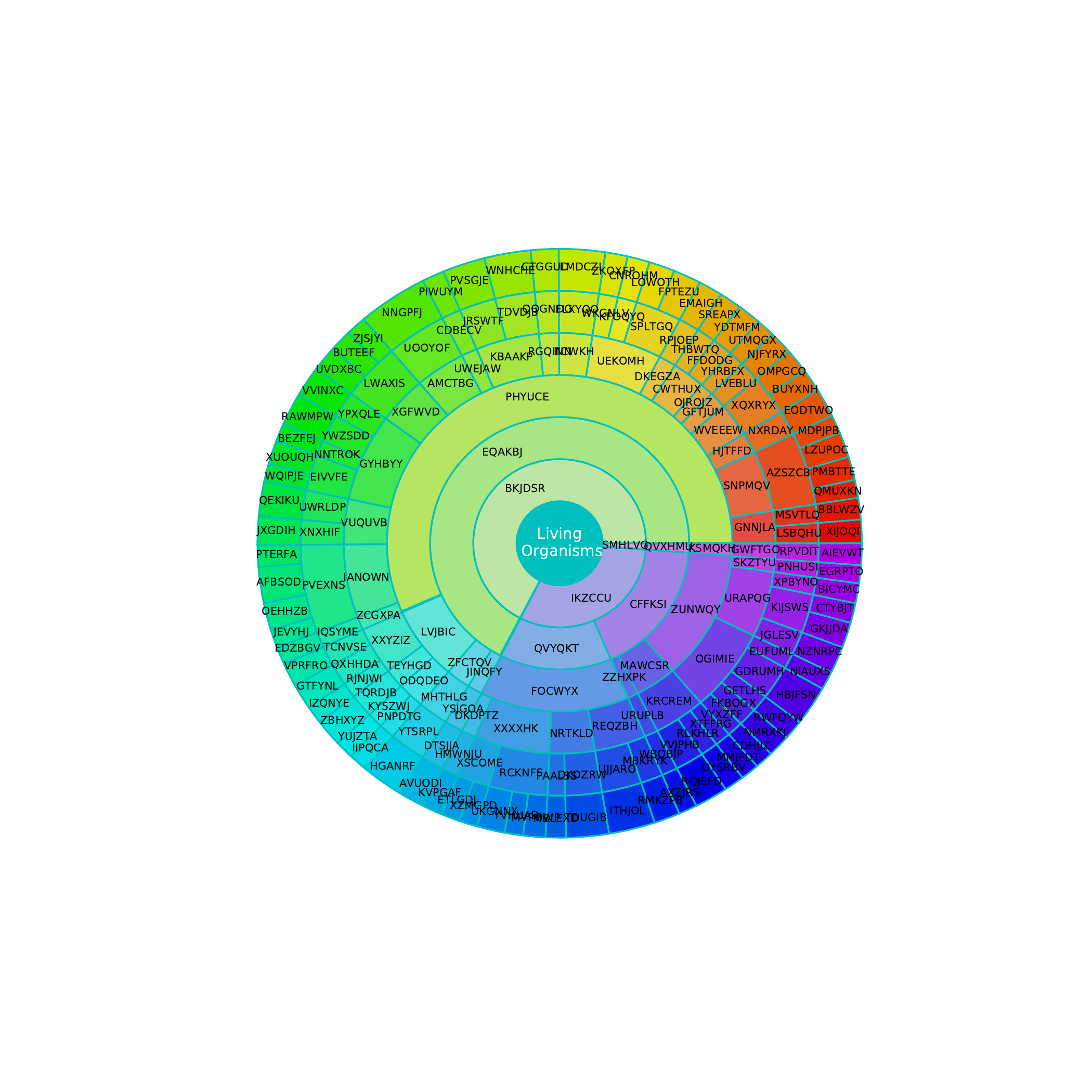}
    \caption{Class hierarchy for iNat-19, only the first $6$ levels of the hierarchy are represented. At the time of writing, only the classes' obfuscated names were publicly available}
    \label{fig:hier:inat}
\end{figure}}{}
\end{document}